\pdfoutput=1
\documentclass{article} 
\usepackage{microtype}
\usepackage{graphicx}
\usepackage{subcaption}
\usepackage{booktabs} 
\usepackage{algorithm}
\usepackage{algorithmic}
\renewcommand{\algorithmicrequire}{\textbf{Input:}}
\renewcommand{\algorithmicensure}{\textbf{Output:}}
  
\usepackage{graphicx}
\usepackage{tikz}
\usetikzlibrary{positioning, arrows.meta}

\usepackage{hyperref}
\usepackage[preprint]{icml2026}

\usepackage{amsmath}
\usepackage{amsfonts}
\usepackage{amssymb}
\usepackage{mathtools}
\usepackage{amsthm}
\usepackage{multirow}

\usepackage{tabularx}
\usepackage[table]{xcolor}
\definecolor{Gray}{gray}{0.9}

\usepackage[capitalize,noabbrev]{cleveref}

\usepackage{enumitem}

\theoremstyle{plain}
\newtheorem{theorem}{Theorem}[section]

\newtheorem{lemma}[theorem]{Lemma}

\theoremstyle{definition}
\newtheorem{definition}[theorem]{Definition}
\newtheorem{assumption}{Assumption}
\theoremstyle{remark}
\newtheorem{remark}[theorem]{Remark}

\renewcommand{\hat}{\widehat}

\usepackage[textsize=tiny]{todonotes}

\icmltitlerunning{HyPAC:\@ Cost-Efficient LLMs–Human Hybrid Annotation with PAC Error Guarantees}

\begin{document}

\twocolumn[
  \icmltitle{HyPAC:\@ Cost-Efficient LLMs–Human Hybrid Annotation \\ with PAC Error Guarantees}



  \icmlsetsymbol{equal}{*}

  \begin{icmlauthorlist}
    \icmlauthor{Hao Zeng}{equal,1}
    \icmlauthor{Huipeng Huang}{equal,1}
    \icmlauthor{Xinhao Qu}{2}
    \icmlauthor{Jianguo Huang}{3}
    \icmlauthor{Bingyi Jing}{4,1}
    \icmlauthor{Hongxin Wei}{1}
  \end{icmlauthorlist}

  \icmlaffiliation{1}{Department of Statistics and Data Science, Southern University of Science and Technology, China}
  \icmlaffiliation{2}{Department of Statistics, University of California at Riverside, USA}
  \icmlaffiliation{3}{College of Computing and Data Science, Nanyang Technological University, Singapore}
  \icmlaffiliation{4}{School of Artificial Intelligence, The Chinese University of Hong Kong, Shenzhen, China}

  \icmlcorrespondingauthor{Hongxin Wei}{weihx@sustech.edu.cn}

  \icmlkeywords{Data Annotation, Large Language Models, Hybrid Labeling, PAC Learning, Cost-Efficient Annotation, Uncertainty Quantification, Human-AI Collaboration}
  \vskip 0.3in
]



\printAffiliationsAndNotice{\icmlEqualContribution} 

\begin{abstract}
Data annotation often involves multiple sources with different cost-quality trade-offs, such as fast large language models (LLMs), slow reasoning models, and human experts.
In this work, we study the problem of routing inputs to the most cost-efficient annotation source while controlling the labeling error on test instances.
We propose \textbf{HyPAC}, a method that adaptively labels inputs to the most cost-efficient annotation source while providing distribution-free guarantees on annotation error.
HyPAC calibrates two decision thresholds using importance sampling and upper confidence bounds, partitioning inputs into three regions based on uncertainty and routing each to the appropriate annotation source.
We prove that HyPAC achieves the minimum expected cost with a probably approximately correct (PAC) guarantee on the annotation error, free of data distribution and pre-trained models.
Experiments on common benchmarks demonstrate the effectiveness of our method, reducing the annotation cost by 78.51\% while tightly controlling the annotation error.
\end{abstract}

\section{Introduction}
Data annotation is an important step in building machine learning systems~\citep{snow2008cheap,gligoric2024can}.
In practice, an annotation system generally involves multiple sources~\citep{qin2025crowdagent,wang2024humanllm} with different cost-quality trade-offs~\citep{yue2023large,dekoninck2025unified}, 
such as fast but less accurate large language models (LLMs), 
slow but more accurate reasoning models with extended inference (e.g., chain-of-thought prompting~\citep{chen2025reasoning,xu2025chain} 
or test-time compute~\citep{snell2024scaling,zhou2025theoretical}), 
and human annotators who provide the most accurate labels but at the highest cost~\citep{li2023coannotating}. 
Selecting the appropriate annotation source for each input is therefore crucial for achieving cost-efficient annotations while maintaining quality.

In the literature, there are many heuristic methods designed for \textit{hybrid annotation}
~\citep{yue2023large,dekoninck2025unified} with the core idea:
Route easy examples to fast, cheap models and send difficult examples to more accurate but expensive sources, such as stronger models or human annotators.
However, existing hybrid labeling approaches often rely on heuristics to select annotation sources, without formal guarantees on the error rate~\citep{li2023coannotating}.
This limitation raises a fundamental issue:
\vspace{-0.5em}
\begin{quote}
\textit{How to design hybrid annotation that balances cost and quality with provable error control?}
\end{quote}
\vspace{-0.5em}

In this paper, we introduce \textbf{HyPAC} (\textbf{Hy}brid annotation with \textbf{PAC} guarantees), a novel method for cost-efficient multi-source annotation with formal error guarantees.
We formulate hybrid annotation as a constrained cost optimization problem: minimize annotation cost while controlling the error rate below a user-specified tolerance.
HyPAC uses importance sampling and upper confidence bounds to set decision thresholds for routing examples to different labeling sources (weak models, strong models, or humans), ensuring error control with high probability while achieving the minimum expected annotation cost.
This provides practitioners with a principled way to optimize the cost-quality trade-off across the full spectrum of available annotators.

Theoretically, we provide formal guarantees for HyPAC in terms of both error control and cost efficiency.
First, we establish that HyPAC satisfies the ($\epsilon$, $\alpha$)-PAC annotation guarantee (Definition~\ref{def:pac-annotation}, Theorem~\ref{thm:main}): with probability at least $1 - \alpha$, the selected thresholds control the error rate below the user-specified tolerance $\epsilon$, and this guarantee is distribution-free.
Second, we show that HyPAC achieves the minimum cost among all threshold-based methods that satisfy the same PAC constraint (Theorem~\ref{thm:cost-optimal}).
Both results rely on the monotonicity of risk and cost functions in the threshold parameters (Lemmas~\ref{lem:monotonicity} and \ref{lem:cost-monotonicity}), which enables fixed-sequence testing without multiple testing correction.

We evaluate the effectiveness of HyPAC across diverse benchmark datasets including general tasks (MMLU-Redux~\citep{gema2025are}), math reasoning tasks (MATH-500~\citep{lightman2023lets}, MATH-L5~\citep{hendrycks2021measuring}, and Zebra-Logic~\citep{lin2025zebralogic}), and coding tasks ($\text{HumanEval}^+$~\citep{liu2023your}).
The results demonstrate that HyPAC significantly reduces the annotation cost compared to expert-only annotation while achieving the target error control.
For example, on MATH-500 with Qwen3-4B-Instruct-2507~\citep{yang2025qwen3} as the non-thinking model and Qwen3-4B-Thinking-2507 as the thinking model, HyPAC reduces the annotation cost by 78.51\%, while controlling the annotation error below 5\%.
Through ablation studies, we show that HyPAC outperforms baseline methods in cost and labeling error, the logits-based uncertainty score is more reliable than the verbalized uncertainty score, and our method is effective under various cost functions and UCBs. 
We provide the code for reproducing our main experiments in this anonymous repository: \href{https://anonymous.4open.science/r/HyPAC-B5AD}{https://anonymous.4open.science/r/HyPAC-B5AD}.

Our contributions are summarized:
\begin{enumerate}[itemsep=0pt]
    \item We formulate hybrid labeling as a \textbf{constrained cost optimization problem} with multiple labeling sources, and provide a provable framework for controlling the labeling error rate.
    \item We propose \textbf{HyPAC}, a novel practical method that calibrates decision thresholds using importance sampling and upper confidence bounds. 
    HyPAC is agnostic to the score function, supports \textbf{input-dependent costs}, and could \textbf{extend to multiple labeling sources}.
    \item We provide \textbf{distribution-free guarantees} on both error control and cost optimality: the error rate stays below a user-specified tolerance with high probability, and HyPAC achieves the minimum cost among all threshold-based methods with the same constraint.
\end{enumerate}

\textbf{Notation.}
Let $\mathcal{X}$ denote the input space and $\mathcal{Y}$ denote the label space.
We denote by $\mathcal{D}$ the distribution of \((x,y)\).
We use $\tilde{f}_1: \mathcal{X} \to \mathcal{Y}$ for the non-thinking model, $\tilde{f}_2: \mathcal{X} \to \mathcal{Y}$ for the thinking model, and $y$ for the human annotation (ground truth).
The cost functions of these three sources are denoted by $c_1(x)$, $c_2(x)$, and $c_h(x)$, respectively, where each cost may depend on the input $x$.
The score function $U: \mathcal{X} \to [0, 1]$ measures the uncertainty of the annotation system for input $x$.
The calibration set is denoted by $\mathcal{S}_{\text{cal}} = \{(x_i, y_i)\}_{i=1}^m$.
When $\mathcal{S}_{\text{cal}}$ appears under an expectation or probability operator, it refers to the randomness induced by the calibration dataset.

\section{Hybrid annotation}
\label{sec:preliminaries}

In this section, we present the problem of hybrid annotation for large language models and introduce the key concepts required for our method.
While our framework applies to general multi-source settings, we focus on a three-source setup in this paper for clarity: a non-thinking model, a thinking model, and human annotation. 
The framework can be easily extended to more sources (see Appendix~\ref{sec:extension}).

We consider a \textit{hybrid annotation} task where input samples $x \in \mathcal{X}$ need to be labeled with ground truth label $y \in \mathcal{Y}$, i.e., human annotation.
In this paper, we use non-thinking models as examples of fast models and thinking models as examples of slow models:
\begin{itemize}[itemsep=0pt]
    \item \textbf{Non-thinking model} $\tilde{f}_1: \mathcal{X} \to \mathcal{Y}$: a fast LLM using standard inference without extended reasoning, which is cheap but less accurate.
    For example, Qwen3-4B-Instruct generates responses quickly with lower computational cost.
    \item \textbf{Thinking model} $\tilde{f}_2: \mathcal{X} \to \mathcal{Y}$: a slow but more powerful LLM using extended reasoning (e.g., chain-of-thought), which is more accurate but more expensive.
    For example, Qwen3-4B-Thinking produces longer reasoning chains and requires more tokens per response.
    \item \textbf{Human annotation} $y$: labels from human experts. In most cases, we treat human labels as ground truth.
\end{itemize}

The goal of hybrid annotation is to design a decision rule \(\hat f\) that selects the annotation source for each input $x$, such that the overall annotation error rate is controlled while minimizing the expected annotation cost.
Formally, we define the annotation error as 
\begin{equation}
    \label{eq:anno_error}
R(\hat f) = \mathbb{E}_{(x,y) \sim \mathcal{D}}[\ell(y, \hat f(x))],
\end{equation}
where $\ell: \mathcal{Y} \times \mathcal{Y} \to \mathbb{R}_+$ is a loss function (e.g., 0-1 loss for classification, semantic loss for text generation).
Given a cost function \(C(\hat f, x)\), the goal is to solve:
\begin{equation}\label{eq:min_cost}
\min_{\hat f} \mathbb{E}_{x\sim \mathcal D_x}C(\hat f, x) \text{ s.t. }  R(\hat f)  \leq \epsilon, 
\end{equation}
where $\epsilon$ is the target error threshold.
We assume the quality of annotation sources follows: human annotation has the highest quality, followed by the thinking model, then the non-thinking model.

However, existing hybrid labeling methods often rely on heuristics for routing decisions~\citep{li2023coannotating,wang2024humanllm,li2024humanllm, yuan2025case}, without formal guarantees on the resulting error rate, which may lead to unpredictable performance in downstream tasks.
To address this challenge, we formalize the hybrid annotation problem with strict error rate control, providing formal guarantees on annotation quality while minimizing cost and enabling practitioners to meet pre-specified error thresholds reliably.

\section{HyPAC}
\label{sec:method}

In this section, we present \textbf{Hy}brid annotation with \textbf{PAC} guarantees (HyPAC), our method for cost-efficient hybrid annotation with rigorous labeling quality control guarantees.
We first define the PAC annotation problem and show how to express it as a constrained optimization over threshold parameters.
We then present our threshold calibration method, which uses importance sampling and upper confidence bounds to identify valid threshold pairs that satisfy the PAC constraint while minimizing annotation cost.

\subsection{PAC annotation}
First, we consider a score $U: \mathcal{X} \to [0, 1]$ that measures the uncertainty of the annotation system for input $x$.
Given two thresholds $(u_1, u_2)$ with $0 \le u_1 \le u_2 \le 1$, we define the decision rule \(\hat f = \) $T_{u_1,u_2}: \mathcal{X} \to \mathcal{Y}$ as:
\begin{equation*}
T_{u_1,u_2}(x) = \begin{cases}
\tilde{f}_1(x) & \text{if } U(x) \le u_1 \\
\tilde{f}_2(x) & \text{if } u_1 < U(x) \le u_2 \\
y & \text{if } U(x) > u_2
\end{cases}
\end{equation*}
Figure~\ref{fig:multi-level-example} illustrates this three-tier routing mechanism.

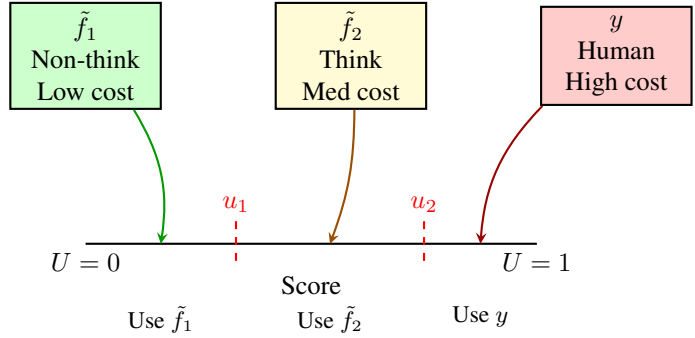
\begin{figure*}[t]
\centering
\begin{tikzpicture}[
  node distance=0.8cm and 1.5cm,
  modelbox/.style={rectangle, draw, thick, minimum width=2cm, minimum height=1.2cm, align=center},
  arrow/.style={->, thick, >=stealth},
  threshold/.style={dashed, red, thick}
]

\node[modelbox, fill=green!20] (f1) {$\tilde{f}_1$\\Non-think\\Low cost};
\node[modelbox, fill=yellow!20, right=of f1] (f2) {$\tilde{f}_2$\\Think\\Med cost};
\node[modelbox, fill=red!20, right=of f2] (f3) {$y$\\Human\\High cost};

\draw[thick] (0,-2.5) -- (6,-2.5);
\node[below] at (0,-2.5) {$U=0$};
\node[below] at (6,-2.5) {$U=1$};
\node[below] at (3,-2.8) {Score};

\draw[threshold] (2,-2.2) -- (2,-2.8);
\draw[threshold] (4.5,-2.2) -- (4.5,-2.8);
\node[above, red] at (2,-2.2) {$u_1$};
\node[above, red] at (4.5,-2.2) {$u_2$};

\node[below] at (1,-3.2) {\small Use $\tilde{f}_1$};
\node[below] at (3.25,-3.2) {\small Use $\tilde{f}_2$};
\node[below] at (5.25,-3.2) {\small Use $y$};

\draw[arrow, green!60!black] (f1) to[bend left=20] (1,-2.5);
\draw[arrow, orange!60!black] (f2) to[bend left=10] (3.25,-2.5);
\draw[arrow, red!60!black] (f3) to[bend right=20] (5.25,-2.5);

\end{tikzpicture}
\caption{\textbf{Hybrid annotation with three sources and two thresholds.}
The routing rule uses score $U(x)$ to select the appropriate annotation source: when $U(x) \le u_1$, use the non-thinking model $\tilde{f}_1$; when $u_1 < U(x) \le u_2$, use the thinking model $\tilde{f}_2$; when $U(x) > u_2$, use human annotation.}
\label{fig:multi-level-example}
\end{figure*}

Then we rewrite its annotation error as 
\[ 
R(\hat f) = R(u_1, u_2) = \mathbb{E}_{(x,y) \sim \mathcal{D}}[\ell(y, T_{u_1,u_2}(x))]
\]
For cost, let $c_1(x)$, $c_2(x)$, and $c_h(x)$ be the cost functions for using the non-thinking model, thinking model, and human annotation on input $x$, respectively.
In practice, these costs often depend on the input, for example, the number of tokens generated by the model. 
Under costs of sources \(c_1(x), c_2(x),\) and \(c_h(x)\), the cost function is:
\begin{equation*}
\begin{aligned}
C(u_1, u_2, x) = \; & c_1(x) \cdot \mathbf{1}\{U(x) \le u_1\} \\ 
& + c_2(x) \cdot \mathbf{1}\{u_1 < U(x) \le u_2\} \\
& + c_h(x) \cdot \mathbf{1}\{U(x) > u_2\}
\end{aligned}
\end{equation*}
We assume that the costs satisfy $c_1(x) \leq c_2(x) \leq c_h(x)$ for almost all $x$, reflecting the cost hierarchy of the three sources. 
Then the goal~Eq.~\eqref{eq:min_cost} becomes, 
\begin{equation}
\label{eq:optimization-ideal}
\begin{aligned}
\min_{u_1, u_2} \quad & \mathbb{E}_{x \sim \mathcal{D}_x}[C(u_1, u_2, x)] \\
\text{subject to} \quad & R(u_1, u_2) \le \epsilon, \quad 0 \le u_1 \le u_2 \le 1
\end{aligned}
\end{equation}
Directly solving this optimization~Eq.~\eqref{eq:optimization-ideal} is impossible because we do not know the true risk $R(u_1, u_2)$ without access to the ground truth labels $y$ for all inputs.
Thus, we consider a relaxed form where the constraint holds in a PAC style.
Given a calibration set $\mathcal{S}_{\text{cal}}$, we seek thresholds $(\hat{u}_1, \hat{u}_2)$ that minimize the expected cost $\mathbb{E}_{x \sim \mathcal{D}_x}[C(\hat{u}_1, \hat{u}_2, x)]$ while satisfying the ($\epsilon$, $\alpha$)-PAC annotation as follows.

\begin{definition}[($\epsilon$, $\alpha$)-PAC annotation]
\label{def:pac-annotation}
An annotation procedure using thresholds $({u}_1, {u}_2)$ is ($\epsilon$, $\alpha$)-PAC if
\begin{equation}
\label{eq:pac-annotation}
\mathbb{P}_{\mathcal{S}_{\text{cal}}} \left( R({u}_1, {u}_2) \le \epsilon \right) \ge 1 - \alpha,
\end{equation}
where the risk $R({u}_1, {u}_2)$ is evaluated on the test distribution $\mathcal{D}$, and the probability is over the calibration set $\mathcal{S}_{\text{cal}}$.\textsl{}
\end{definition}
Since multiple threshold pairs may satisfy this condition, we call the set of all such pairs the \textit{($\epsilon$, $\alpha$)-PAC feasible region}.
It is inspired by the PAC efficient~\citep{zeng2025pac}, which considers a two-level reasoning system with fast and slow models. 
We extend it to multiple annotators with a human annotator.
We aim to find thresholds that satisfy the PAC guarantee while minimizing the annotation cost:
\begin{equation}
\label{eq:optimization-pac}
\begin{aligned}
\min_{u_1, u_2} \quad & \mathbb{E}_{x \sim \mathcal{D}_x}[C(u_1, u_2, x)] \\
\text{s.t. } \quad & \mathbb{P}_{\mathcal{S}_{\text{cal}}} \left( R({u}_1, {u}_2) \le \epsilon \right) \ge 1 - \alpha, \\ & 0 \le u_1 \le u_2 \le 1.
\end{aligned}
\end{equation}

This framework naturally extends to more than three annotation sources by adding more thresholds to partition the confidence space into more regions, where each region is routed to a different annotation source. We discuss this extension in detail in Appendix~\ref{sec:extension}.

\subsection{Threshold calibration}
\label{sec:calibration}

The main challenge in solving this problem~\eqref{eq:optimization-pac} is that we cannot directly check whether a threshold pair $(u_1, u_2)$ satisfies the PAC condition, because the true risk $R(u_1, u_2)$ is unknown without access to ground truth labels for all inputs.
To address this, we use a valid upper confidence bound (UCB) on the risk: a bound that holds with high probability can serve as a substitute for the true risk to identify valid thresholds.
In light of this, our method follows a two-step approach: first, we identify all threshold pairs that lie in the ($\epsilon$, $\alpha$)-PAC feasible region using the UCB; then, among valid thresholds, we select the one with the lowest cost.

\paragraph{Importance sampling for unbiased risk estimation.}
We use importance sampling~\citep{owen2000safe,tokdar2010importance} to obtain unbiased risk estimates.
For each example $x_i$ in the calibration set, we query the ground truth label $y_i$ with probability $p_i$.
The importance-weighted risk estimator is:
\begin{equation} 
\label{eq:is-risk}
\hat{R}_{\text{IS}}(u_1, u_2) = \frac{1}{m} \sum_{i=1}^m \frac{Z_i}{p_i} \cdot \ell(y_i, T_{u_1,u_2}(x_i))
\end{equation}
where $Z_i \sim \text{Bernoulli}(p_i)$ indicates whether we queried the ground truth for example $i$.
This estimator is unbiased: $\mathbb{E}[\hat{R}_{\text{IS}}(u_1, u_2)] = R(u_1, u_2)$.
In practice, we often use uniform sampling with $p_i = p$ for all $i$.
The choice of $p$ involves a trade-off: a larger $p$ gives more accurate estimates but requires more ground truth queries, while a smaller $p$ reduces calibration cost but increases variance.

\paragraph{UCB construction.}
Given the importance-weighted risk estimates, we construct an upper confidence bound (UCB) for each threshold pair $(u_1, u_2)$.
Define the importance-weighted loss for each sample as:
\begin{equation*}
W_i(u_1, u_2) = \frac{Z_i}{p_i} \cdot \ell(y_i, T_{u_1,u_2}(x_i))
\end{equation*}
so that $\hat{R}_{\text{IS}}(u_1, u_2) = \frac{1}{m} \sum_{i=1}^m W_i(u_1, u_2)$.
Since $\{W_i\}_{i=1}^m$ are i.i.d.\ random variables with $\mathbb{E}[W_i] = R(u_1, u_2)$, by the Central Limit Theorem (CLT), the sample mean is asymptotically normal:
\begin{equation*}
\frac{\hat{R}_{\text{IS}}(u_1, u_2) - R(u_1, u_2)}{\hat{\sigma}_W(u_1, u_2) / \sqrt{m}} \xrightarrow{d} \mathcal{N}(0, 1)
\end{equation*}
where $\hat{\sigma}_W(u_1, u_2)$ is the sample standard deviation of $\{W_i\}$.
This yields the CLT-based UCB:
\begin{equation}
\label{eq:ucb}
\hat{L}_{u_1,u_2}(\alpha) = \hat{R}_{\text{IS}}(u_1, u_2) + z_{1-\alpha} \frac{\hat{\sigma}_W(u_1, u_2)}{\sqrt{m}}
\end{equation}
where $z_{1-\alpha}$ is the $(1-\alpha)$-quantile of the standard normal distribution.
Other valid choices (e.g., Hoeffding-type or empirical Bernstein bounds) can be used to achieve distribution-free finite-sample guarantees.

\paragraph{Threshold selection.}
Given the UCBs for all threshold pairs, we select the pair that minimizes cost while ensuring the error rate is controlled.
We estimate the expected cost using the empirical cost on the calibration set:
\begin{equation*}
\hat{C}(u_1, u_2) = \frac{1}{m} \sum_{i=1}^m C(u_1, u_2, x_i)
\end{equation*}
The optimization problem then becomes:
\begin{equation}
\label{eq:threshold-selection}
\begin{aligned}
(\hat{u}_1, \hat{u}_2) = &  \arg\min_{u_1, u_2} \quad  \hat{C}(u_1, u_2) \\
\text{s.t. } \quad & \hat{L}_{u_1,u_2}(\alpha) \le \epsilon \\
& 0 \le u_1 \le u_2 \le 1
\end{aligned}
\end{equation}

We then solve the optimization problem in Eq.~\eqref{eq:threshold-selection} by searching over a given threshold grid.
This calibration procedure ensures that the selected thresholds control the error rate on future test data with high probability, while minimizing the annotation cost.
We illustrate it in Figure~\ref{fig:ucb-illustration}.

\begin{figure}[ht]
\centering
\begin{tikzpicture}[
  arrow/.style={->, thick, >=stealth},
  feasible/.style={fill=green!20, draw=green!60!black, thick},
  ucbboundary/.style={blue!70!black, thick},
  trueboundary/.style={blue!40, thick, dashed}
]
\draw[arrow] (0,0) -- (5.5,0) node[right] {$u_1$};
\draw[arrow] (0,0) -- (0,5.5) node[above] {$u_2$};

\fill[gray!10] (0,0) -- (5,5) -- (0,5) -- cycle;

\fill[feasible] (0,0) -- (0,3.5) -- (1.5,3.5) -- (2.5,2.5) -- cycle;

\draw[ucbboundary] (0,3.5) -- (1.5,3.5) -- (2.5,2.5);

\draw[trueboundary] (0,4.2) -- (2.2,4.2) -- (3.2,3.2);

\draw[thick, gray] (0,0) -- (5,5);
\node[gray, rotate=45] at (4.2,4.5) {\small $u_1 = u_2$};

\fill[red] (1.5,3.5) circle (3pt);
\node[below right, red] at (1.5,3.8) {$(\hat{u}_1, \hat{u}_2)$};


\node at (0.85,2) {\small \textbf{feasible area}};

\node[below] at (5,0) {\small $1$};
\node[left] at (0,5) {\small $1$};

\draw[blue!70!black, thick] (2.8,1.6) -- (3.4,1.6);
\node[right] at (3.5,1.6) {\scriptsize UCB: $\hat{L}_{u_1,u_2} = \epsilon$};

\draw[blue!40, thick, dashed] (2.8,1.2) -- (3.4,1.2);
\node[right] at (3.5,1.2) {\scriptsize Risk boundary: $R(u_1,u_2) = \epsilon$};

\fill[red] (3.1,0.8) circle (2pt);
\node[right] at (3.5,0.8) {\scriptsize UCB-based solution};

\end{tikzpicture}
\caption{
\textbf{Illustration of UCB-based threshold selection in the $(u_1, u_2)$ space.}
The green region shows the $(\epsilon, \alpha)$-PAC feasible region where $\hat{L}_{u_1,u_2}(\alpha) \le \epsilon$ and $u_1 \le u_2$.
The solid blue line is the UCB boundary, and the dashed blue line is the true risk boundary.
Since the UCB upper bounds the true risk, the UCB boundary lies inside the true risk boundary.
The red point marks the optimal solution selected by HyPAC, which maximizes thresholds (i.e., minimizing cost) within the feasible region.}
\label{fig:ucb-illustration}
\end{figure}
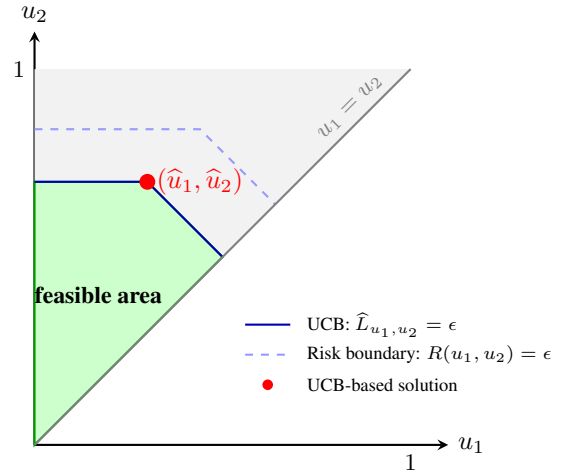

\paragraph{Deployment.}
Once we have calibrated the thresholds $(\hat{u}_1, \hat{u}_2)$, the deployment phase is straightforward.
For each test example $x$, we compute the score $U(x)$ and route the example to the appropriate annotation source based on the calibrated thresholds.

The complete procedure, including both calibration and deployment phases, is summarized in Algorithm~\ref{alg:hypac}.
Next, we provide a theoretical analysis showing that this method achieves PAC annotation guarantees and cost optimality.

\begin{algorithm}[t!]
\caption{HyPAC}
\label{alg:hypac}
\begin{algorithmic}[1]
\REQUIRE Calibration set $\mathcal{S}_{\text{cal}} = \{x_i\}_{i=1}^m$, test example $x$, models $\tilde{f}_1$, $\tilde{f}_2$, score function $U$, error tolerance $\epsilon$, significance level $\alpha$, sampling probability $p$, threshold grid $\mathcal{G}$
\ENSURE Label $\hat{y}$ for test example $x$
\STATE \textit{// Calibration phase}
\FOR{$i = 1$ to $m$}
\STATE Compute score $U(x_i)$ and predictions $\tilde{f}_1(x_i)$, $\tilde{f}_2(x_i)$
\STATE Sample $Z_i \sim \text{Bernoulli}(p)$; if $Z_i = 1$, query ground truth $y_i$
\ENDFOR
\FOR{each $(u_1, u_2) \in \mathcal{G}$ with $u_1 \le u_2$}
    \STATE Compute importance-weighted risk estimate $\hat{R}_{\text{IS}}(u_1, u_2)$ via~Eq.~\eqref{eq:is-risk}
    \STATE Compute UCB $\hat{L}_{u_1,u_2}(\alpha)$ via~Eq.~\eqref{eq:ucb}
\ENDFOR
\STATE Compute $(\hat{u}_1, \hat{u}_2)$ via Eq.~\eqref{eq:threshold-selection}
\STATE \textit{// Deployment phase}
\STATE Compute score $U(x)$
\STATE {\bfseries Return} $T_{\hat{u}_1,\hat{u}_2}(x)$ 
\end{algorithmic}
\end{algorithm}

\section{Theoretical analysis}
\label{sec:theory}

In this section, we provide a theoretical analysis of the HyPAC method.
We first show that both the risk and cost functions are monotone in the threshold parameters (Section~\ref{sec:monotonicity}), which is key to establishing the PAC guarantee and cost optimality.
We then present our main results (Section~\ref{sec:pac-guarantee}): a PAC guarantee showing that the selected thresholds control the error rate (Theorem~\ref{thm:main}), i.e., PAC annotation, and a cost optimality result showing that HyPAC achieves the minimum cost among all PAC-valid solutions (Theorem~\ref{thm:cost-optimal}).

\subsection{Monotonicity}
\label{sec:monotonicity}

We now prove that the risk function $R(u_1, u_2)$ is monotone in both threshold parameters.
This result is crucial: it implies that to minimize cost while controlling risk, we should select the largest thresholds within the PAC feasible region.

\begin{lemma}[Risk monotonicity]
\label{lem:monotonicity}
The risk function $$R(u_1, u_2) = \mathbb{E}_{(x,y) \sim \mathcal{D}}[\ell(y, T_{u_1,u_2}(x))]$$ is monotone non-decreasing in both $u_1$ and $u_2$.
Specifically, for any fixed $u_2$ and $u_1 < u_1'$ with $u_1' \le u_2$, we have $R(u_1, u_2) \le R(u_1', u_2)$.
Similarly, for any fixed $u_1$ and $u_2 < u_2'$, we have $R(u_1, u_2) \le R(u_1, u_2')$.
\end{lemma}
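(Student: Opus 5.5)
The plan is to compare the decision rules $T_{u_1,u_2}$ and $T_{u_1',u_2}$ (resp.\ $T_{u_1,u_2'}$) pointwise in $x$. Raising a threshold changes the routing only on a thin band of scores, so the risk difference reduces to an expectation over that band. We then sign the integrand using the quality ordering of the sources assumed in Section~\ref{sec:preliminaries}. Concretely, we make that ordering precise as the pointwise (or at least conditional-on-$x$) inequality
\[
0=\ell(y,y)\le \ell(y,\tilde f_2(x))\le \ell(y,\tilde f_1(x)).
\]
The left inequality is just the convention that human labels are the ground truth together with $\ell\ge 0$. The right one formalizes ``the thinking model is more accurate than the non-thinking model.''

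\textbf{Step 1 (varying $u_1$).} Fix $u_2$ and let $u_1<u_1'\le u_2$. From the definition of $T_{u_1,u_2}$, the two rules agree whenever $U(x)\le u_1$, $U(x)>u_1'$, or $U(x)>u_2$. On the band $B_1=\{u_1<U(x)\le u_1'\}$, the first rule outputs $\tilde f_2(x)$ and the second outputs $\tilde f_1(x)$. Hence
\[
R(u_1',u_2)-R(u_1,u_2)=\mathbb{E}_{(x,y)\sim\mathcal D}\bigl[\bigl(\ell(y,\tilde f_1(x))-\ell(y,\tilde f_2(x))\bigr)\mathbf 1\{x\in B_1\}\bigr].
\]
This is nonnegative under the ordering assumption. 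It actually suffices that $\mathbb{E}[\ell(y,\tilde f_1(x))-\ell(y,\tilde f_2(x))\mid U(x)=s]\ge 0$ for almost every $s$, and I would note this weaker sufficient condition in a remark.

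\textbf{Step 2 (varying $u_2$).} Fix $u_1$ and let $u_2<u_2'$. The rules differ only on $B_2=\{u_2<U(x)\le u_2'\}$, where the smaller threshold routes to the human (loss $\ell(y,y)=0$) and the larger routes to $\tilde f_2$. If $u_1>u_2$ were allowed, the larger threshold could route part of $B_2$ to $\tilde f_1$ instead; we restrict to $u_1\le u_2$ as in the paper. Either way the difference is $\mathbb{E}[\ell(y,\tilde f_j(x))\mathbf 1\{x\in B_2\}]\ge 0$, using only $\ell\ge 0$.

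The algebra is routine once the rules are decomposed into the three indicator regions. The main obstacle is Step~1: its sign genuinely requires an assumption comparing $\tilde f_1$ and $\tilde f_2$ on the relevant score band. The informal statement ``the thinking model has higher quality'' must therefore be made explicit, most naturally as the pointwise loss inequality above, with the conditional-expectation version as the minimal hypothesis. Step~2, by contrast, holds unconditionally given $\ell(y,y)=0$.
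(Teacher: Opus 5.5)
Your proposal is correct and follows the paper's proof essentially step for step. The paper also isolates the bands $\{u_1<U(x)\le u_1'\}$ and $\{u_2<U(x)\le u_2'\}$. It signs the first difference via the accuracy ordering of $\tilde f_2$ over $\tilde f_1$, stated there informally as $\mathbb{E}[\ell(y,\tilde f_1(x))]\ge\mathbb{E}[\ell(y,\tilde f_2(x))]$ ``for any conditional distribution'', and signs the second using only $\ell(y,y)=0$ and $\ell\ge 0$. Your explicit hypothesis (pointwise, or the weaker condition $\mathbb{E}[\ell(y,\tilde f_1(x))-\ell(y,\tilde f_2(x))\mid U(x)=s]\ge 0$ for a.e.\ $s$) makes that informal assumption precise, and your indicator form of the difference avoids the paper's implicit division by $\mathbb{P}(u_1<U(x)\le u_1')$.
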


The complete proof is given in Appendix~\ref{app:monotonicity}.
The monotonicity property ensures that the feasible region $\{(u_1, u_2) : R(u_1, u_2) \le \epsilon, 0 \le u_1 \le u_2 \le 1\}$ has a nested structure:
If $(u_1, u_2)$ is feasible, then any $(u_1', u_2')$ with $u_1' \le u_1$ and $u_2' \le u_2$ is also feasible.
This nested structure enables fixed-sequence testing without multiple testing correction, as we show in the proof of Theorem~\ref{thm:main}.

\begin{lemma}[Cost monotonicity]
\label{lem:cost-monotonicity}
Assuming the costs satisfy $c_1(x) \le c_2(x) \le c_h(x)$ (cost ordering), we have:
Both the expected cost $\mathbb{E}_{x \sim \mathcal{D}_x}[C(u_1, u_2, x)]$ and the empirical cost $\hat{C}(u_1, u_2) = \frac{1}{m}\sum_{i=1}^m C(u_1, u_2, x_i)$ (for any finite sample $\{x_i\}_{i=1}^m$) are monotone non-increasing in both $u_1$ and $u_2$.
Furthermore, if $c_1(x) < c_2(x) < c_h(x)$ strictly for almost all $x$, both the expected cost and empirical cost are strictly decreasing in both $u_1$ and $u_2$.
\end{lemma}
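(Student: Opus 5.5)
The plan is to argue pointwise in $x$ and then integrate. Monotonicity of the per-input cost $C(u_1,u_2,x)$ then passes to both the expected cost (by monotonicity of expectation) and the empirical cost $\hat C$ (a finite average of such functions). Fix $x$ and write $s=U(x)$. Restricted to $0\le u_1\le u_2\le 1$, exactly one indicator in the definition of $C$ equals one, so $C(u_1,u_2,x)\in\{c_1(x),c_2(x),c_h(x)\}$ according to whether $s\le u_1$, $u_1<s\le u_2$, or $s>u_2$.

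\textbf{Step 1 (increasing $u_1$).} Fix $u_2$ and take $u_1<u_1'\le u_2$. If $s\le u_1$ or $s>u_2$, the assigned source is unchanged, so the cost difference is zero. If $u_1<s\le u_1'$, the source switches from $\tilde f_2$ to $\tilde f_1$, and the change is $c_1(x)-c_2(x)\le 0$. If $u_1'<s\le u_2$, it is unchanged. Hence
\[C(u_1',u_2,x)-C(u_1,u_2,x)=\bigl(c_1(x)-c_2(x)\bigr)\mathbf 1\{u_1<U(x)\le u_1'\}\le 0.\]

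\textbf{Step 2 (increasing $u_2$).} Fix $u_1$ and take $u_1\le u_2<u_2'$. By the same case analysis,
\[C(u_1,u_2',x)-C(u_1,u_2,x)=\bigl(c_2(x)-c_h(x)\bigr)\mathbf 1\{u_2<U(x)\le u_2'\}\le 0.\]
Both displays hold for almost all $x$ by the cost ordering. Taking $\mathbb E_{x\sim\mathcal D_x}$, or averaging over $\{x_i\}$, gives the non-increasing claims for the expected and empirical costs. For the empirical cost I will note that the sample is assumed to avoid the null set where the ordering fails; otherwise the ordering can be taken to hold for every $x$.

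\textbf{Step 3 (strictness), the delicate step.} Under strict ordering, the expected-cost difference in Step 1 equals $\mathbb E[(c_1-c_2)\mathbf 1\{u_1<U\le u_1'\}]$. An integrand that is strictly negative almost surely makes this expectation strictly negative exactly when $\mathbb P(u_1<U(x)\le u_1')>0$, and similarly for Step 2 with $\mathbb P(u_2<U(x)\le u_2')>0$. Strict decrease therefore genuinely requires that $U(x)$ put positive mass on every subinterval. For $\hat C$ it requires that some calibration score fall in the interval, so empirically one gets strictness only between grid points separated by a calibration score. I will state the strict claim under this support condition, e.g.\ $U(x)$ having a density positive on $[0,1]$, as the natural reading of the lemma. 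If the intended statement has no such condition, the honest conclusion is that strict ordering gives strict decrease whenever the interval carries mass, and weak monotonicity otherwise.
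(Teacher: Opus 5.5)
Your proposal is correct and is essentially the paper's proof. The paper derives the same interval-indicator difference formulas: for the expected cost it uses the rearrangement $\mathbb{E}[c_h(x)] + \mathbb{E}[(c_1(x)-c_2(x))\mathbf{1}\{U(x)\le u_1\}] + \mathbb{E}[(c_2(x)-c_h(x))\mathbf{1}\{U(x)\le u_2\}]$, and for $\hat{C}$ it uses a case analysis like yours. Your Step 3 caveat matches the paper's own qualification that strictness holds only when the relevant interval has positive probability (population) or contains a sample score (empirical), and your remark about sample points in the exceptional null set covers a detail the paper silently assumes away.
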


The proof is given in Appendix~\ref{app:cost-monotonicity}.
These two monotonicity results have important practical implications.
Risk monotonicity (Lemma~\ref{lem:monotonicity}) tells us that larger thresholds lead to higher risk (more errors).
Cost monotonicity (Lemma~\ref{lem:cost-monotonicity}) tells us that larger thresholds lead to lower cost (more examples routed to cheaper sources).


\begin{table*}[t]
\centering
\caption{\textbf{HyPAC achieves higher cost savings than both PAC labeling and CSE baselines while maintaining controlled annotation error.}
Results for annotation error and cost savings are reported at $\epsilon=0.05$ and $\alpha=0.05$ using the token-based cost function.
The non-thinking model $\tilde{f}_1$ is Qwen3-4B-Instruct-2507, and the thinking model $\tilde{f}_2$ is Qwen3-4B-Thinking-2507.
HyPAC is compared against three baselines: (1) PAC labeling with non-thinking models and experts ($\tilde{f}_1$ + Human), (2) PAC labeling with thinking models and experts ($\tilde{f}_2$ + Human), and (3) CSE.
\textbf{Bold} numbers indicate superior results.
}
\label{tab:method_comparison}
\renewcommand\arraystretch{1.1}
\resizebox{1.00\textwidth}{!}{
\setlength{\tabcolsep}{2.3mm}{
\begin{tabular}{ll|cccc|cccc}
\toprule
\multirow{2}{*}{Datasets}
& \multirow{2}{*}{Metric}
& \multicolumn{4}{c}{Logits-based Score}
& \multicolumn{4}{c}{Verbalized Score} \\
\cmidrule(lr){3-6} \cmidrule(lr){7-10}
&
& $\tilde{f}_1$ + $\tilde{f}_2$ + Human
& $\tilde{f}_1$ + Human
& $\tilde{f}_2$ + Human
& CSE
& $\tilde{f}_1$ + $\tilde{f}_2$ + Human
& $\tilde{f}_1$ + Human
& $\tilde{f}_2$ + Human
& CSE \\
\midrule

\multirow{2}{*}{MMLU-Redux}
& Error (\%)
& 4.09 $\pm$ 0.94 & 4.00 $\pm$ 1.20 & 3.97 $\pm$ 1.34 & 3.96 $\pm$ 4.70
& 2.97 $\pm$ 0.47 & 2.02 $\pm$ 0.14 & 0.60 $\pm$ 0.08 & 3.37 $\pm$ 2.58 \\
& Save (\%)
& \textbf{49.40} $\pm$ 5.11 & 48.97 $\pm$ 7.36 & 40.70 $\pm$ 7.72 & 28.01 $\pm$ 32.84
& \textbf{39.27} $\pm$ 0.38 & 37.64 $\pm$ 0.71 & -6.03 $\pm$ 0.22 & 36.06 $\pm$ 14.34 \\
\midrule

\multirow{2}{*}{MATH-500}
& Error (\%)
& 4.00 $\pm$ 1.71 & 3.39 $\pm$ 1.77 & 3.09 $\pm$ 1.54 & 4.14 $\pm$ 4.45
& 4.22 $\pm$ 2.99 & 3.41 $\pm$ 1.70 & 1.75 $\pm$ 2.20 & 4.40 $\pm$ 1.62 \\
& Save (\%)
& \textbf{78.51} $\pm$ 4.69 & 70.85 $\pm$ 12.04 & 67.20 $\pm$ 16.70 & 43.38 $\pm$ 51.14
& \textbf{76.94} $\pm$ 9.34 & 66.22 $\pm$ 21.17 & 16.62 $\pm$ 44.97 & 74.51 $\pm$ 1.98 \\
\midrule

\multirow{2}{*}{MATH-L5}
& Error (\%)
& 4.17 $\pm$ 1.34 & 3.46 $\pm$ 1.74 & 3.39 $\pm$ 1.56 & 4.20 $\pm$ 3.96
& 4.35 $\pm$ 8.30 & 2.88 $\pm$ 1.41 & 1.78 $\pm$ 1.66 & 4.32 $\pm$ 4.89 \\
& Save (\%)
& \textbf{72.55} $\pm$ 4.35 & 62.98 $\pm$ 15.22 & 43.49 $\pm$ 25.19 & 37.50 $\pm$ 46.17
& \textbf{70.94} $\pm$ 15.63 & 67.64 $\pm$ 22.32 & 21.83 $\pm$ 24.00 & 70.66 $\pm$ 32.48 \\
\midrule

\multirow{2}{*}{Zebra-Logic}
& Error (\%)
& 4.48 $\pm$ 1.30 & 3.86 $\pm$ 1.65 & 3.63 $\pm$ 1.34 & 4.58 $\pm$ 2.62
& 4.47 $\pm$ 6.27 & 1.34 $\pm$ 2.22 & 0.14 $\pm$ 0.15 & 4.34 $\pm$ 1.69 \\
& Save (\%)
& \textbf{46.10} $\pm$ 3.75 & 43.76 $\pm$ 6.34 & 45.39 $\pm$ 8.15 & 36.83 $\pm$ 22.26
& \textbf{36.03} $\pm$ 8.34 & 1.11 $\pm$ 19.50 & -22.17 $\pm$ 0.44 & 5.56 $\pm$ 15.42 \\
\midrule

\multirow{2}{*}{$\text{HumanEval}^{+}$}
& Error (\%)
& 3.25 $\pm$ 1.65 & 3.39 $\pm$ 3.09 & 2.42 $\pm$ 1.69 & 3.44 $\pm$ 3.34
& 3.36 $\pm$ 1.74 & 2.28 $\pm$ 3.24 & 1.69 $\pm$ 1.97 & 3.28 $\pm$ 1.77 \\
& Save (\%)
& \textbf{86.48} $\pm$ 5.35 & 62.13 $\pm$ 23.17 & 75.26 $\pm$ 24.46 & 67.51 $\pm$ 21.39
& 70.55 $\pm$ 34.50 & 30.49 $\pm$ 42.60 & 49.97 $\pm$ 43.01 & \textbf{87.46} $\pm$ 1.63 \\
\bottomrule

\end{tabular}
}
}
\vspace{-5pt}
\end{table*}

\subsection{PAC guarantee and cost optimal}
\label{sec:pac-guarantee}

We now state our main results.
We first introduce the key assumption on the UCB construction.

\begin{assumption}[UCB validity]
\label{assump:ucb}
For each threshold pair $(u_1, u_2)$ with $0 \le u_1 \le u_2 \le 1$ and any significance level $\alpha \in (0,1)$, the upper confidence bound $\hat{L}_{u_1,u_2}(\alpha)$ satisfies:
\begin{equation*}
\mathbb{P}_{\mathcal{S}_{\text{cal}}} \left( R(u_1, u_2) \le \hat{L}_{u_1,u_2}(\alpha) \right) \ge 1 - \alpha
\end{equation*}
where the probability is taken over the random draw of the calibration set $\mathcal{S}_{\text{cal}}$.
\end{assumption}

This assumption requires  UCB to provide valid coverage for the true risk.
In Section~\ref{sec:calibration}, we construct such a UCB.
The validity of this UCB can be established using various methods, such as the central limit theorem (CLT), Hoeffding's inequality, or Bernstein bounds; see Appendix~\ref{app:ucb-validity}.
Under the assumption, we could have the validity of  HyPAC:
\begin{theorem}[PAC guarantee]
\label{thm:main}
Let $(\hat{u}_1, \hat{u}_2)$ be the thresholds selected by Algorithm~\ref{alg:hypac} with error tolerance $\epsilon$ and significance level $\alpha$.
Under Assumption~\ref{assump:ucb}, and assuming independent calibration/test sets and fixed pre-trained models~\(\tilde{f}_1, \tilde{f}_2\), we have:
\begin{equation*}
\mathbb{P}_{\mathcal{S}_{\text{cal}}} \left( R(\hat{u}_1, \hat{u}_2) \le \epsilon \right) \ge 1 - \alpha
\end{equation*}
where the probability is taken over the random draw of the calibration set $\mathcal{S}_{\text{cal}}$.
\end{theorem}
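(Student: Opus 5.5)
The plan is to use risk monotonicity (Lemma~\ref{lem:monotonicity}) to reduce the data-dependent choice of $(\hat u_1,\hat u_2)$ to a single fixed test. Once that reduction is in place, the statement follows from Assumption~\ref{assump:ucb} applied to one deterministic threshold pair, and no union bound over the grid is needed.

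\textbf{Step 1: the boundary of the true feasible region.} Since $\tilde f_1,\tilde f_2,U$ are fixed and independent of $\mathcal{S}_{\text{cal}}$, the map $(u_1,u_2)\mapsto R(u_1,u_2)$ is deterministic. Its feasible set $\mathcal{F}=\{(u_1,u_2)\in\mathcal{G}: u_1\le u_2,\ R(u_1,u_2)\le\epsilon\}$ is therefore deterministic as well. By Lemma~\ref{lem:monotonicity}, $\mathcal{F}$ is a down-set in the coordinatewise order. Along any monotone path through $\mathcal{G}$ (increasing both coordinates), the infeasible points form a terminal segment. Let $u^\ast$ be the first infeasible point on this path, i.e.\ the first point with $R(u^\ast)>\epsilon$; it is deterministic. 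If no such point exists, every grid point is feasible and the claim is trivial.

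\textbf{Step 2: reduce the failure event to a single test.} The failure event is $\{R(\hat u_1,\hat u_2)>\epsilon\}$. On this event, $\hat u=(\hat u_1,\hat u_2)$ is an infeasible point that passed the UCB test $\hat L_{\hat u}(\alpha)\le\epsilon$. We want to show that this forces $\hat L_{u^\ast}(\alpha)\le\epsilon$. With a fixed-sequence (monotone) implementation, points are accepted only as long as all earlier points in the sequence passed. Accepting an infeasible $\hat u$ then means every point up to $\hat u$ passed, and in particular the first infeasible point $u^\ast$ passed. Hence
\[
\{R(\hat u)>\epsilon\}\subseteq\{\hat L_{u^\ast}(\alpha)\le\epsilon<R(u^\ast)\}\subseteq\{R(u^\ast)>\hat L_{u^\ast}(\alpha)\}.
\]
Applying Assumption~\ref{assump:ucb} at the fixed pair $u^\ast$ bounds the right-hand event by $\alpha$, which proves the theorem.

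\textbf{Main obstacle.} The hard step is justifying this containment for the selection rule actually written in Eq.~\eqref{eq:threshold-selection}. That rule takes the argmin of $\hat C$ over all grid points whose UCB passes, which is not literally a sequential test. The empirical UCB need not be monotone in $(u_1,u_2)$ even though $R$ is, so an infeasible point could pass while a smaller one fails. I would first try to handle this by interpreting the feasible set as the largest down-set on which all tests pass, equivalently by running fixed-sequence testing along the cost-decreasing direction given by Lemma~\ref{lem:cost-monotonicity} and stopping at the first failure. Cost monotonicity then ensures that the cost minimizer over this down-set is its extreme point, so the selected $\hat u$ lies on the tested path. The argument of Step 2 applies to this rule.

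If a genuine two-dimensional search is required, the fallback is to fix a chain in $\mathcal{G}$ in advance and test along it; this handles grids of any size with no correction. If even that is too restrictive, a Bonferroni correction over the boundary points would work, at the cost of replacing $\alpha$ by $\alpha/|\mathcal{G}|$. I would state clearly which variant of Algorithm~\ref{alg:hypac} the guarantee covers.
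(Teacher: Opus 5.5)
Your Steps 1--2 are the paper's own argument, made precise. Lemma~\ref{lem:monotonicity} makes the nulls $R(u_1,u_2)>\epsilon$ nested, an error can occur only if the first true null is rejected, and Assumption~\ref{assump:ucb} is then applied at that one deterministic pair. This is correct for a rule that tests along a chain fixed in advance and stops at the first failure. You are also right that Eq.~\eqref{eq:threshold-selection} is not such a rule. The paper's proof has the same hole: it simply asserts that the algorithm tests pairs ``from largest to smallest'' and ``stops at the first rejection''. Moreover, the lexicographic $(u_2,u_1)$ order it uses is not a chain for the coordinatewise order. For instance, $(0.9,0.9)$ precedes $(0,0.95)$, yet Lemma~\ref{lem:monotonicity} says nothing about how their risks compare.

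The genuine gap is in your primary repair. The largest down-set on which every UCB test passes is not equivalent to fixed-sequence testing along one path. In two dimensions its frontier is a staircase of mutually incomparable maximal pairs. Lemma~\ref{lem:cost-monotonicity} gives only coordinatewise monotonicity, so it does not single out an ``extreme point''. Which frontier pair minimizes $\hat C$ depends on the data and on how cost trades off between the two coordinates, and that pair need not lie on any path chosen beforehand. For this rule the best containment you get is
\[
\{R(\hat u_1,\hat u_2)>\epsilon\}\subseteq\bigcup_{v\in M}\{\hat L_v(\alpha)\le\epsilon<R(v)\},
\]
where $M$ is the antichain of minimal infeasible grid pairs; this yields a bound of $|M|\alpha$. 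Assumption~\ref{assump:ucb} is only a per-pair marginal statement, so nothing prevents these events from being nearly disjoint. Hence the bound cannot be improved to $\alpha$ in general, and the literal argmin rule is no better off.

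What actually delivers level $\alpha$ is your fallback. Fix the chain beforehand and return the last pair before the first failure; on a chain, Lemma~\ref{lem:cost-monotonicity} does make this the cheapest accepted pair. If the very first test fails, return an all-human pair, which has zero risk. That proves the guarantee for a modified selection rule, not for Eq.~\eqref{eq:threshold-selection} as written, and you should present it that way rather than as an equivalent reading of Algorithm~\ref{alg:hypac}.
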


The complete proof is given in Appendix~\ref{app:main-theorem}.
This theorem provides a rigorous guarantee that HyPAC controls the error rate with high probability.
The guarantee is \textbf{distribution-free}: it holds for any data distribution $\mathcal{D}$ and any pre-trained models $\tilde{f}_1$, $\tilde{f}_2$, without assumptions on the data structure or model quality.
We also provide a finite-sample guarantee for the empirical risk; see Appendix~\ref{app:empirical-risk}.

Beyond error control, HyPAC also achieves cost optimality among all methods that satisfy the PAC constraint.

\begin{theorem}[Cost optimality]
\label{thm:cost-optimal}
Let $\mathcal{S}_{\text{cal}}$ be a calibration set and $\mathcal{F}(\mathcal{S}_{\text{cal}}) = \{(u_1, u_2) : \hat{L}_{u_1,u_2}(\alpha) \le \epsilon, \; 0 \le u_1 \le u_2 \le 1\}$ be the PAC feasible region based on $\mathcal{S}_{\text{cal}}$.
Assuming the cost ordering $c_1(x) \le c_2(x) \le c_h(x)$, the threshold pair $(\hat{u}_1, \hat{u}_2)$ selected by Algorithm~\ref{alg:hypac} satisfies:
\begin{equation*}
\mathbb{E}_{x \sim \mathcal{D}_x}[C(\hat{u}_1, \hat{u}_2, x)] = \min_{(u_1, u_2) \in \mathcal{F}(\mathcal{S}_{\text{cal}})} \mathbb{E}_{x \sim \mathcal{D}_x}[C(u_1, u_2, x)],
\end{equation*}
\end{theorem}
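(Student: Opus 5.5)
The plan is to reduce the claim to a statement about maximal elements of $\mathcal{F}(\mathcal{S}_{\text{cal}})$ using Lemma~\ref{lem:cost-monotonicity}, and then to argue that the empirical objective in Eq.~\eqref{eq:threshold-selection} and the population objective $\mathbb{E}_{x\sim\mathcal{D}_x}[C(u_1,u_2,x)]$ select the same point of $\mathcal{F}(\mathcal{S}_{\text{cal}})$. Throughout I condition on $\mathcal{S}_{\text{cal}}$. With that conditioning, $\mathcal{F}(\mathcal{S}_{\text{cal}})$ is a fixed subset of the (finite) threshold grid $\mathcal{G}$, so both minima exist.

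\textbf{Step 1 (reduction to the frontier).} Write $\bar C(u_1,u_2)=\mathbb{E}_{x}[C(u_1,u_2,x)]$. By Lemma~\ref{lem:cost-monotonicity}, under $c_1\le c_2\le c_h$ both $\bar C$ and $\hat C$ are non-increasing in each coordinate. Suppose $(u_1,u_2)$ and $(u_1',u_2')$ both lie in $\mathcal{F}(\mathcal{S}_{\text{cal}})$ with $u_1\le u_1'$ and $u_2\le u_2'$. Then $(u_1',u_2')$ is weakly better for both objectives. Hence, for either objective, some minimizer can be taken among the componentwise-maximal (Pareto) points of $\mathcal{F}(\mathcal{S}_{\text{cal}})$.

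\textbf{Step 2 (identifying the selected point).} If $\mathcal{F}(\mathcal{S}_{\text{cal}})$ has a greatest element, i.e.\ a point dominating every other feasible pair, then Step 1 already shows that this point minimizes both $\hat C$ and $\bar C$ over $\mathcal{F}(\mathcal{S}_{\text{cal}})$. Algorithm~\ref{alg:hypac} returns it, up to ties in $\hat C$, and ties can be resolved toward the larger thresholds without changing $\bar C$. This is the situation drawn in Figure~\ref{fig:ucb-illustration}, where HyPAC ``maximizes thresholds''. I would make this case precise first. The nested structure from Lemma~\ref{lem:monotonicity} motivates treating the feasible region as a down-set. For the UCB itself, I would either assume monotonicity of $\hat L$ or restrict to a fixed-sequence ordering of the grid, along which feasibility is monotone.

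\textbf{Step 3 (main obstacle: non-comparable frontier points).} In two dimensions the Pareto frontier generally contains incomparable pairs. One example is raising $u_1$ while lowering $u_2$. On such pairs $\hat C$ and $\bar C$ need not induce the same ordering, so the exact identity in the theorem need not hold for a finite calibration sample. I plan two routes.

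\begin{itemize}
\item \emph{Definitional route (first choice).} Read the selection in Algorithm~\ref{alg:hypac} as minimizing the cost criterion over $\mathcal{F}(\mathcal{S}_{\text{cal}})$. Since $\mathcal{F}(\mathcal{S}_{\text{cal}})$ is fixed given $\mathcal{S}_{\text{cal}}$, the displayed equality then holds by construction. The content of the theorem is that restricting to $\mathcal{F}(\mathcal{S}_{\text{cal}})$ loses nothing relative to any other PAC-valid rule built from the same UCB.
\item \emph{Approximate route (fallback).} If I must keep the empirical $\hat C$, I would apply Hoeffding's inequality together with a union bound over the finite grid $\mathcal{G}$, assuming bounded costs $c_h\le B$. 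This gives $\sup_{\mathcal{G}}|\hat C-\bar C|\le B\sqrt{\log(2|\mathcal{G}|/\delta)/(2m)}$ with probability at least $1-\delta$. Consequently the selected pair is optimal for $\bar C$ up to twice that quantity, and exactly optimal whenever the gap in $\bar C$ between distinct frontier points exceeds it.
\end{itemize}

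I expect the write-up to state the exact equality under the greatest-element condition of Step 2, or under the definitional reading. The uniform-convergence bound then covers the general incomparable-frontier case, and I expect that case to be the main obstacle.
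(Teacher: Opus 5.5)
Your Steps 1--2 are the paper's entire proof, made explicit. The paper fixes $\mathcal{S}_{\text{cal}}$, applies Lemma~\ref{lem:cost-monotonicity} to both $\hat C$ and $\bar C$, and concludes that both are minimized ``at the maximum feasible thresholds.'' This tacitly assumes a greatest feasible pair exists. The paper never addresses your Step 3, and your objection is correct: it is a gap in the paper's argument and in the statement itself, not just an obstacle for you.

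In the triangle $0\le u_1\le u_2\le 1$, even a down-set generally has several maximal points. In Figure~\ref{fig:ucb-illustration}, the marked corner and the point where the UCB boundary meets the diagonal are incomparable. The two objectives can rank such points differently. Take $c_1\equiv 1$, $c_2\equiv 2$, $c_h\equiv 3$ and $a=(a_1,a_2)$, $b=(b_1,b_2)$ with $a_1<b_1\le b_2<a_2$. Then $\bar C(a)-\bar C(b)=\mathbb{P}(a_1<U\le b_1)-\mathbb{P}(b_2<U\le a_2)$, while $\hat C(a)-\hat C(b)$ is the same expression with calibration frequencies. Suppose $a,b$ are the maximal points of $\mathcal{F}(\mathcal{S}_{\text{cal}})$ with $\hat C(a)>\hat C(b)$ but $\bar C(a)<\bar C(b)$. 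Then Algorithm~\ref{alg:hypac} returns $b$ or a point dominated by $b$, whose expected cost exceeds $\bar C(a)$. The proof of Theorem~\ref{thm:main} even describes the selection as the lexicographic maximum in $(u_2,u_1)$, which picks the corner regardless of cost.

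So the unconditional identity cannot be proved. Your plan is the right resolution: exact equality under a greatest-element hypothesis, plus approximate optimality in general. You should also state explicitly that you read $\mathcal{F}(\mathcal{S}_{\text{cal}})$ as a subset of $\mathcal{G}$. Literally, the statement's $\mathcal{F}$ ranges over the whole triangle, where discretization alone can break the equality.

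Three corrections to your own plan.

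\begin{enumerate}
\item Drop the definitional route. Substituting $\bar C$ for $\hat C$ in Eq.~\eqref{eq:threshold-selection} replaces HyPAC by a rule that needs the unknown $\mathcal{D}_x$. It also makes the identity a tautology, so it says nothing about Algorithm~\ref{alg:hypac}.
\item The tie-break is a genuine extra convention; it does not leave $\bar C$ unchanged. $\hat C$ is constant between consecutive calibration scores, whereas $\bar C$ can strictly decrease there. So even with a greatest element $g$, an arbitrary minimizer of $\hat C$ may be a dominated point with $\bar C$ strictly above $\bar C(g)$. You must stipulate that ties go to the largest thresholds; the paper misses this as well.
\item A monotone $\hat L$ only makes $\mathcal{F}(\mathcal{S}_{\text{cal}})$ a down-set; it does not supply a greatest element. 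Restricting the search to a componentwise chain does supply one. However, the feasible set then differs from the one in the statement, so present that as a modified theorem.
\end{enumerate}

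Your uniform-convergence fallback is sound as stated when $\mathcal{G}$ is fixed in advance, and it yields $\bar C(\hat u_1,\hat u_2)\le\min_{\mathcal{F}(\mathcal{S}_{\text{cal}})}\bar C+2t$. You can strengthen it. $\hat C-\bar C$ is a threshold-free term plus two empirical processes over nested indicators $\mathbf{1}\{U\le u\}$ with bounded weights. A VC/DKW-type bound therefore gives the same conclusion uniformly over all thresholds, so $|\mathcal{G}|$ drops out and a data-dependent grid is allowed.
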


The proof is given in Appendix~\ref{app:cost-optimal}.
This result shows that HyPAC is not only a PAC annotation but also cost-optimal given the calibration data.

\section{Experiments}
\label{sec:experiments}

In this section, we evaluate HyPAC on multiple benchmark datasets. 
Our experiments aim to answer two questions: 
(1) whether HyPAC achieves higher cost savings compared to existing provable methods with the same error control budgets; 
(2) whether HyPAC provides reliable error control compared to heuristic baselines. 

\subsection{Experimental setup}
\label{sec:exp-setup}

\paragraph{Models.}
We evaluate HyPAC on diverse open-source LLMs across different parameter scales.
For the main experiments, we employ Qwen3-4B-Instruct-2507~\citep{yang2025qwen3} as the non-thinking model and Qwen3-4B-Thinking-2507 as the thinking model.
In addition, we evaluate HyPAC using two alternative model pairs. 
Specifically, we use Llama-3.1-8B-Instruct~\citep{grattafiori2024llama} as the non-thinking model with DeepSeek-R1-Distill-Llama-8B~\citep{deepseek-ai2025deepseekr1} as the thinking model, and Qwen2.5-32B-Instruct~\citep{qwen2025qwen25} as the non-thinking model with DeepSeek-R1-Distill-Qwen-32B as the thinking model.
The sampling temperature and other hyperparameters for LLMs are detailed in Appendix~\ref{para:hyperparameter}.

\paragraph{Uncertainty scores.}
We adopt two complementary score functions:
a logits-based score for the white-box case and a verbalized score for the black-box case.
In particular, let $y = (y_1, y_2, \ldots, y_l)$ be a generated sequence of length $l$.
We define the logits-based score as:
\begin{equation*}
U_{\text{logits}}(x) = 1 - \frac{1}{l} \sum_{j=1}^l \mathbb{P}(y_j \mid y_1, \ldots, y_{j-1}, x)
\end{equation*}
where $\mathbb{P}(y_j \mid y_1, \ldots, y_{j-1}, x)$ is the conditional probability of token $y_j$.
This score measures the average confidence of the model across all generated tokens.
For the verbalized score, we let the model explicitly state its self-reported confidence.
In this study, we report the average confidence over 10 trials, and the prompts are listed in Table~\ref{tab:verbalized_score_prompt}.

\begin{figure*}
    \centering
    \begin{subfigure}[b]{0.49\textwidth}
        \centering
        \includegraphics[height=5.0cm,width=8cm]{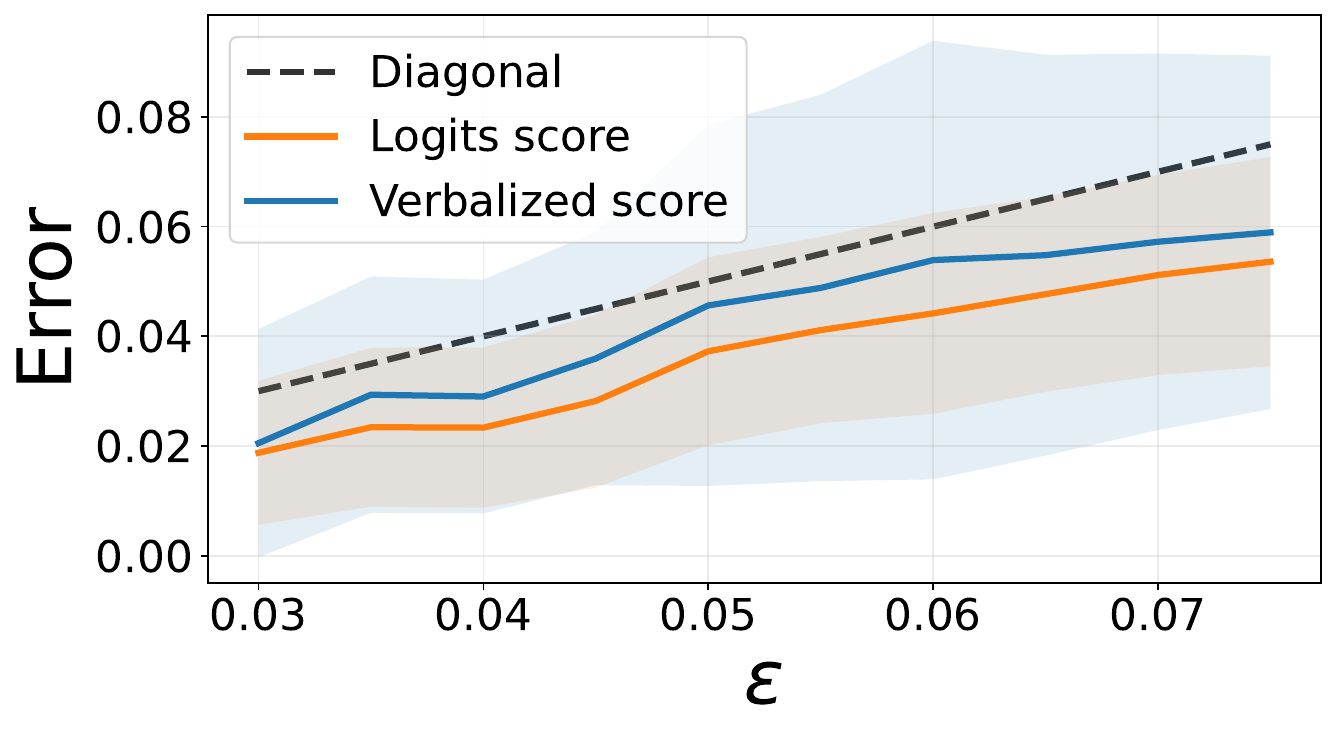}
        \label{fig:pdf_ce}
    \end{subfigure}
    \begin{subfigure}[b]{0.49\textwidth}
        \centering
        \includegraphics[height=5.0cm,width=8cm]{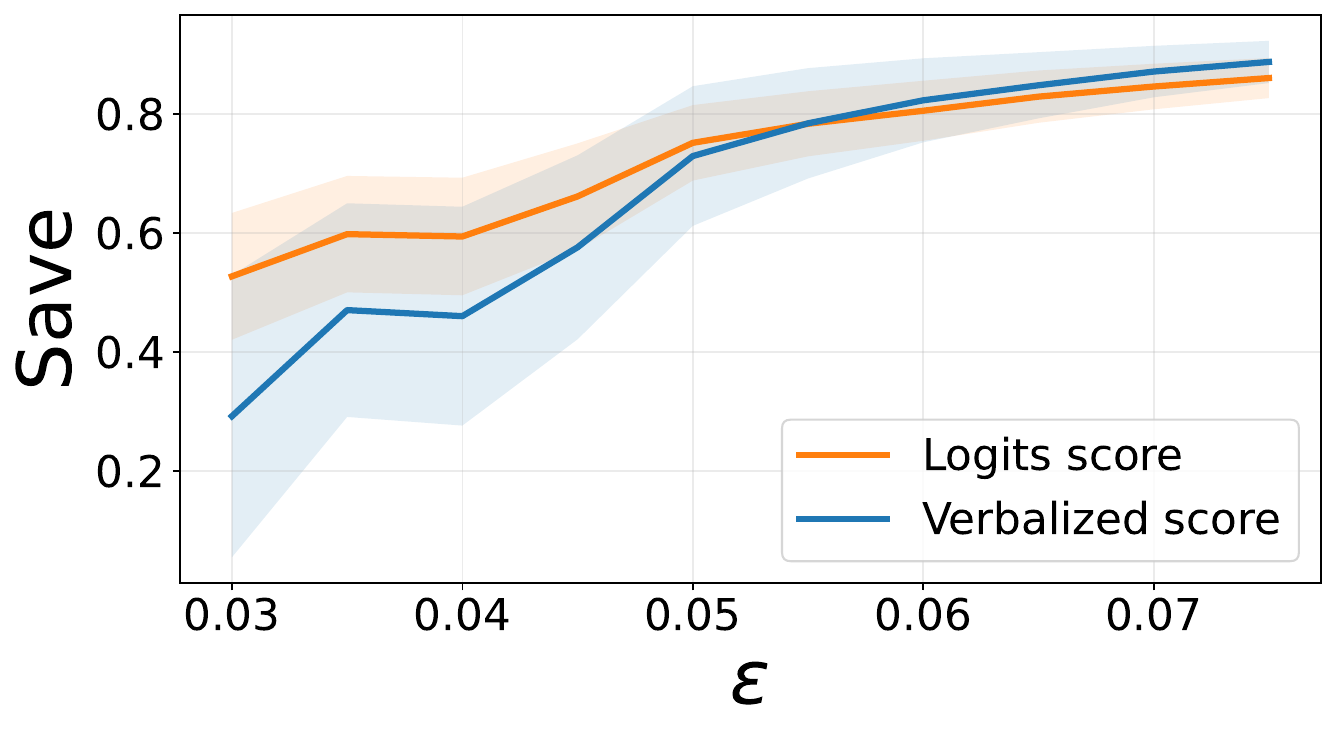}
        \label{fig:pdf_logitnorm}
    \end{subfigure}
\caption{\textbf{The logits-based uncertainty score achieves better cost-error tradeoff than the verbalized score.}
Error and cost savings of HyPAC with different uncertainty score functions under a confidence level of $\alpha = 0.05$. 
Experiments are conducted on MATH-500.
The shaded areas represent standard deviations.}
     \label{fig:verbalized_vs_logits}
    \vspace{-8pt}
\end{figure*}

\paragraph{Datasets.}
We conduct experiments on general tasks, math and text reasoning tasks, and coding tasks, using MMLU-Redux~\citep{gema2025are}, MATH-500~\citep{lightman2023lets}, MATH-L5~\citep{hendrycks2021measuring}, Zebra-Logic~\citep{lin2025zebralogic}, and $\text{HumanEval}^{+}$~\citep{liu2023your} datasets.
Table~\ref{tab:accuracy} reports the accuracy of each LLM on these datasets.
For each dataset, we partition the original test set into a calibration set and a test set randomly.
Table~\ref{tab:datasets_split} provides details on the specific splitting strategies.

\paragraph{Cost functions.}
We consider two cost functions: token-based and API-based.
For the token-based cost function, the cost of annotating a sample is defined as the number of tokens generated by the LLM. 
The expert annotation cost is set to the maximum allowed new tokens, which is 32,768.
For the API cost function, the annotation cost is computed: 
\[
c_{\mathrm{API}}^{\tilde{f}} = p_{\mathrm{in}}^{\tilde{f}} \cdot n_{\mathrm{in}} + p_{\mathrm{out}}^{\tilde{f}} \cdot n_{\mathrm{out}},
\]
where $n_{\mathrm{in}}$ and $n_{\mathrm{out}}$ denote the numbers of input and generated tokens, respectively, and $p_{\mathrm{in}}^{\tilde{f}}$ and $p_{\mathrm{out}}^{\tilde{f}}$ are the model-specific per-token prices.
For expert annotation, we adopt the thinking model’s pricing, and set $n_{\mathrm{out}}$ to 32,768.
The exact per-token prices are provided in Appendix~\ref{sec:implementation}.


\paragraph{Baselines.}
We compare HyPAC against four baselines: (1) PAC labeling~\citep{candes2025probably} with non-thinking models and experts, (2) PAC labeling with thinking models and experts, (3) cascaded selective evaluation (CSE)~\citep{jung2024trust}, (4) CoAnnotating~\citep{li2023coannotating}. 
We provide a detailed introduction to these baselines in Appendix~\ref{sec:baselines}.

\paragraph{Evaluation metrics.}
We evaluate the annotation performance by measuring the following metrics: \textbf{annotation error} defined in Eq.~\eqref{eq:anno_error}, and \textbf{cost savings}, defined as the relative reduction in annotation cost achieved by the hybrid annotation methods compared to expert-only annotation:
\begin{equation*}
\left(
1 - 
\frac{\sum_{i=1}^{n_{\text{test}}} C(\hat f, x_i)}{\sum_{i=1}^{n_{\text{test}}} C(y, x_i)}
\right)
\times 100\%.
\end{equation*}

\paragraph{Implementation Details}
Across all experiments, we fix the confidence level at $\alpha = 0.05$ and the sampling weight at $p = 0.9$, while varying $\epsilon$.
Each experiment is repeated 100 times, and we report the mean annotation error and cost savings.
Unless otherwise specified, we use the CLT-based UCB, the token-based cost function, and the binary 0--1 loss.
More details of implementation are provided in Appendix~\ref{sec:implementation}.

\subsection{Results}

\paragraph{HyPAC tightly controls annotation error and reduces annotation costs compared with other provable methods.}
Table~\ref{tab:method_comparison} compares HyPAC with baseline methods on five annotation tasks.
HyPAC keeps the error below~$\epsilon$ across all benchmarks while saving more costs.
For example, on HumanEval$^{+}$ with $\epsilon = 0.05$ and $\alpha = 0.05$ using logits-based scores, HyPAC has an average error of 3.47\% and cost savings of 86.69\%.
In contrast, PAC labeling baselines save 60.89\% and 72.86\% of costs, respectively.
HyPAC increases cost savings by 13\% over the best baseline, showing that using both models is better for saving costs.
We see similar high savings on other datasets and score functions.
HyPAC provides a balance between cost and accuracy in most cases.
Appendix~\ref{subsec:additional_main_experiment} shows results for two other model pairs.

\paragraph{HyPAC controls the annotation error with different uncertainty scores and is more stable in the logits-based score.}
HyPAC works with any uncertainty score and ensures error control while reducing costs regardless of the score function used.
Figure~\ref{fig:verbalized_vs_logits} compares logits-based and verbalized scores across different error tolerances $\epsilon$. Both scores achieve error control, but they show different levels of stability.
The logits-based score shows smoother and more stable results, with lower standard deviations in both error and cost savings.
Reliability diagrams in Figure~\ref{fig:reliability_8panel} in Appendix~\ref{subsec:more_reliability} show that logits-based scores have lower expected calibration error~\citep{naeini2015obtaining}, suggesting better reliability.
In line with these findings, Table~\ref{tab:method_comparison} shows that the logits-based score saves more costs on most datasets.
These results show that while HyPAC works with various scores, the choice of score impacts performance stability.

\paragraph{HyPAC provides strict error control while CoAnnotating fails.}
We compare HyPAC with heuristic methods like CoAnnotating~\citep{li2023coannotating} across multiple benchmarks.
As shown in Figure~\ref{fig:annotations_compare_CoAnnotating}, these heuristic methods fail to control the annotation error.
Even when these algorithms are calibrated with a target error during their process, they still cannot stay within the error target.
In contrast, HyPAC controls the error below~$\epsilon$ because of the PAC guarantee. 

\paragraph{Additional results.}
Due to space constraints, we place additional analyses in the appendix.
Appendix~\ref{subsec:semantic_loss} presents the performance of HyPAC under the semantic loss function.
Appendix~\ref{subsec:api_cost_performance} reports results under API-based cost functions.
Appendix~\ref{subsec:calibration_set_size} conducts an ablation study on the impact of calibration set size, and shows the robustness of HyPAC on calibration set size.
Appendix~\ref{subsec:different_ucb} includes experiments with distribution-free, finite-sample UCBs, complementing our main results based on the CLT-based asymptotic UCB.
Appendix~\ref{subsection:distribution_shift} shows HyPAC's robustness against distribution shift.

\begin{figure}
    \centering
    \includegraphics[width=0.8\linewidth]{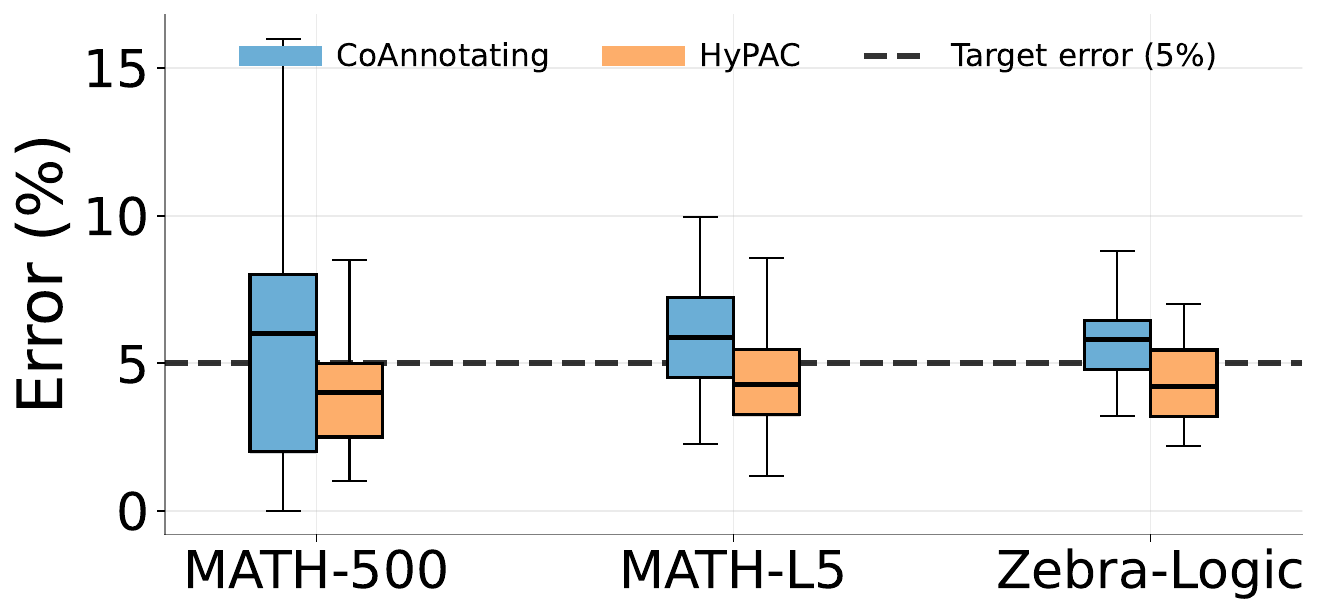}
    \caption{
    \textbf{HyPAC controls the annotations error while CoAnnotating fails.}    
    Comparison of annotation error rates between HyPAC and CoAnnotating~\citep{li2023coannotating} across three benchmarks. HyPAC controls the error rate under the specified tolerance level \(0.05\) while CoAnnotating fails.}
    \label{fig:annotations_compare_CoAnnotating}
\end{figure}

\section{Related work}
\label{sec:related-work}

\paragraph{PAC-style error rate control}
Modern PAC-style error rate control is mainly built on distribution-free risk control.
Conformal prediction provides finite-sample coverage guarantees under exchangeability~\citep{angelopoulos2025conformal}.
This idea has been extended to control the expected loss on prediction sets, enabling risk control for tasks such as multi-label classification and image segmentation~\citep{bates2021distributionfree}.
Risk control has been further reframed as multiple hypothesis testing, allowing simultaneous control of multiple risks, including false discovery rate~\citep{angelopoulos2025learn}.
These methods have been applied to various domains: conformal language modeling for text generation with guaranteed quality~\citep{quach2024conformal}, and automatically adaptive conformal risk control that adjusts to the difficulty of test samples~\citep{blot2025automatically}.
Localized adaptive risk control has been introduced for online calibration with conditional guarantees~\citep{zecchin2024localized}.
Recently, distribution-free risk control has been applied to data labeling, producing labels that are probably approximately correct~\citep{candes2025probably}.
However, this method only considers a single human-machine switch without optimizing the overall cost across multiple annotation sources.
Our work extends this framework to a multi-source setting, optimizing the routing policy to minimize cost while controlling the labeling error.

\paragraph{Hybrid annotation}
Hybrid annotation combines automatic predictors with human-in-the-loop querying to reduce labeling costs.
Recent advances use large language models as annotators, with studies showing that GPT-3 can serve as a data annotator across multiple NLP tasks~\citep{ding2023gpt3} and that LLMs can match crowdsourced annotators with proper prompting~\citep{he2024annollm}.
Several works explore human-LLM collaboration for annotation, including uncertainty-guided work allocation~\citep{li2023coannotating}, verifier-based quality assessment that routes low-confidence samples to humans~\citep{wang2024humanllm}, unsupervised collaboration between LLMs and small models~\citep{xiao2023freeal}, and multi-agent systems that manage multiple annotation sources~\citep{qin2025crowdagent}.
However, these methods optimize label efficiency through empirical heuristics without providing distribution-free guarantees on the error rate of automatically assigned labels.
Recent work has taken steps toward statistical validity by combining LLM confidence with human annotations~\citep{gligoric2024can}, though this focuses on downstream inference rather than label quality control.
Concurrent work proposes PAC labels that guarantee low overall labeling error with high probability~\citep{candes2025probably}, but uses a fixed confidence threshold without adaptive routing.
Our work extends this direction by integrating distribution-free risk control into a hybrid annotation with multiple annotation sources, achieving label efficiency explicit finite-sample error bounds.

\section{Conclusion}
\label{sec:conclusion}

We presented HyPAC, a hybrid annotation method that labels inputs among multiple annotation sources with different cost-quality trade-offs.
HyPAC calibrates two thresholds using upper confidence bounds, providing a PAC guarantee that the error rate stays below a user-specified tolerance with high probability.
We proved that the selected thresholds achieve the minimum expected cost among all feasible solutions.
Experiments on benchmarks show that HyPAC achieves large cost savings while maintaining the target error rate.
By providing a practical and theoretically grounded approach to hybrid annotation, we hope this work will help researchers and practitioners build more cost-efficient annotation systems while maintaining quality guarantees.


\newpage

\section*{Impact Statement}
This paper presents work whose goal is to advance the field of Machine
Learning. There are many potential societal consequences of our work, none
which we feel must be specifically highlighted here.

\bibliographystyle{icml2026}
\bibliography{main}

\newpage
\appendix
\onecolumn

\section{Proofs of theoretical results}
\label{app:proofs}

This appendix contains the complete proofs of all theoretical results presented in Section~\ref{sec:theory}.

\subsection{Proof of Lemma~\ref{lem:monotonicity}}
\label{app:monotonicity}

In this section, we provide the complete proof of Lemma~\ref{lem:monotonicity}, which establishes the monotonicity of the risk function with respect to both threshold parameters.
This result is fundamental to our theoretical framework.

\begin{proof}[Proof of Lemma~\ref{lem:monotonicity}]
Recall that the risk function is:
\begin{equation*}
R(u_1, u_2) = \mathbb{E}_{(x,y) \sim \mathcal{D}}[\ell(y, T_{u_1,u_2}(x))]
\end{equation*}
where $T_{u_1,u_2}(x) = \tilde{f}_1(x)$ if $U(x) \le u_1$, $T_{u_1,u_2}(x) = \tilde{f}_2(x)$ if $u_1 < U(x) \le u_2$, and $T_{u_1,u_2}(x) = y$ otherwise.

\textbf{Monotonicity in $u_1$.}
Fix $u_2$ and consider $u_1 < u_1' \le u_2$.
The mappings $T_{u_1,u_2}$ and $T_{u_1',u_2}$ differ only for examples with $u_1 < U(x) \le u_1'$: these are routed to $\tilde{f}_2$ under $T_{u_1,u_2}$ but to $\tilde{f}_1$ under $T_{u_1',u_2}$.
The risk difference is:
\begin{equation*}
R(u_1', u_2) - R(u_1, u_2) = \mathbb{P}(u_1 < U(x) \le u_1') \cdot \mathbb{E}_{u_1 < U(x) \le u_1'}[\ell(y, \tilde{f}_1(x)) - \ell(y, \tilde{f}_2(x))]
\end{equation*}
Since $\tilde{f}_2$ is more accurate than $\tilde{f}_1$, we have $\mathbb{E}[\ell(y, \tilde{f}_1(x))] \ge \mathbb{E}[\ell(y, \tilde{f}_2(x))]$ for any conditional distribution.
Thus $R(u_1', u_2) - R(u_1, u_2) \ge 0$.

\textbf{Monotonicity in $u_2$.}
Fix $u_1$ and consider $u_2 < u_2'$ with $u_1 \le u_2$.
The mappings differ only for examples with $u_2 < U(x) \le u_2'$: these are routed to human annotation under $T_{u_1,u_2}$ but to $\tilde{f}_2$ under $T_{u_1,u_2'}$.
Since $\ell(y, y) = 0$, the risk difference is:
\begin{equation*}
R(u_1, u_2') - R(u_1, u_2) = \mathbb{P}(u_2 < U(x) \le u_2') \cdot \mathbb{E}_{u_2 < U(x) \le u_2'}[\ell(y, \tilde{f}_2(x))] \ge 0
\end{equation*}
where the inequality follows from the non-negativity of the loss function.
\end{proof}

\subsection{Proof of Lemma~\ref{lem:cost-monotonicity}}
\label{app:cost-monotonicity}

\begin{proof}
We prove both the population version and the empirical version.

\textbf{Population version.}
The expected cost can be written as:
\begin{equation*}
\mathbb{E}_{x \sim \mathcal{D}_x}[C(u_1,u_2,x)] = \mathbb{E}[c_1(x) \cdot \mathbf{1}\{U(x) \le u_1\}] + \mathbb{E}[c_2(x) \cdot \mathbf{1}\{u_1 < U(x) \le u_2\}] + \mathbb{E}[c_h(x) \cdot \mathbf{1}\{U(x) > u_2\}]
\end{equation*}
Rearranging terms:
\begin{equation*}
\mathbb{E}_{x \sim \mathcal{D}_x}[C(u_1,u_2,x)] = \mathbb{E}[c_h(x)] + \mathbb{E}[(c_1(x) - c_2(x)) \cdot \mathbf{1}\{U(x) \le u_1\}] + \mathbb{E}[(c_2(x) - c_h(x)) \cdot \mathbf{1}\{U(x) \le u_2\}]
\end{equation*}

Under the cost ordering assumption $c_1(x) \le c_2(x) \le c_h(x)$, we have:
\begin{equation*}
c_1(x) - c_2(x) \le 0 \quad \text{and} \quad c_2(x) - c_h(x) \le 0 \quad \text{for almost all } x
\end{equation*}

For any $u_1 < u_1'$ with $u_1' \le u_2$, the cost difference is:
\begin{align*}
\mathbb{E}[C(u_1',u_2,x)] - \mathbb{E}[C(u_1,u_2,x)] &= \mathbb{E}[(c_1(x) - c_2(x)) \cdot (\mathbf{1}\{U(x) \le u_1'\} - \mathbf{1}\{U(x) \le u_1\})] \\
&= \mathbb{E}[(c_1(x) - c_2(x)) \cdot \mathbf{1}\{u_1 < U(x) \le u_1'\}] \le 0
\end{align*}
where the inequality follows from $c_1(x) - c_2(x) \le 0$.

Similarly, for any $u_2 < u_2'$, we have:
\begin{align*}
\mathbb{E}[C(u_1,u_2',x)] - \mathbb{E}[C(u_1,u_2,x)] &= \mathbb{E}[(c_2(x) - c_h(x)) \cdot \mathbf{1}\{u_2 < U(x) \le u_2'\}] \le 0
\end{align*}

Therefore, the expected cost is non-increasing in both $u_1$ and $u_2$.

\textbf{Empirical version.}
For any finite sample $\{x_i\}_{i=1}^m$, the empirical cost is:
\begin{equation*}
\hat{C}(u_1, u_2) = \frac{1}{m}\sum_{i=1}^m C(u_1, u_2, x_i)
\end{equation*}
where $C(u_1, u_2, x_i)$ is the cost for example $x_i$ under the routing policy $T_{u_1,u_2}$.

By the definition of the routing policy, we have:
\begin{equation*}
C(u_1, u_2, x_i) = \begin{cases}
c_1(x_i) & \text{if } U(x_i) \le u_1 \\
c_2(x_i) & \text{if } u_1 < U(x_i) \le u_2 \\
c_h(x_i) & \text{if } U(x_i) > u_2
\end{cases}
\end{equation*}

For any $u_1 < u_1'$ with $u_1' \le u_2$, the empirical cost difference is:
\begin{align*}
\hat{C}(u_1', u_2) - \hat{C}(u_1, u_2) &= \frac{1}{m}\sum_{i=1}^m [C(u_1', u_2, x_i) - C(u_1, u_2, x_i)] \\
&= \frac{1}{m}\sum_{i: u_1 < U(x_i) \le u_1'} [c_1(x_i) - c_2(x_i)] \le 0
\end{align*}
where the inequality follows from $c_1(x_i) \le c_2(x_i)$ for all $i$.

Similarly, for any $u_2 < u_2'$:
\begin{align*}
\hat{C}(u_1, u_2') - \hat{C}(u_1, u_2) &= \frac{1}{m}\sum_{i: u_2 < U(x_i) \le u_2'} [c_2(x_i) - c_h(x_i)] \le 0
\end{align*}

Therefore, the empirical cost is non-increasing in both $u_1$ and $u_2$.

\textbf{Strict monotonicity.}
When the cost ordering is strict (i.e., $c_1(x) < c_2(x) < c_h(x)$ for almost all $x$), the inequalities in both the population and empirical versions become strict whenever the corresponding indicator sets have positive measure (for population) or are non-empty (for empirical).
\end{proof}

\subsection{Proof of Theorem~\ref{thm:main}}
\label{app:main-theorem}

In this section, we provide the complete proof of Theorem~\ref{thm:main}, which establishes that HyPAC provides a PAC guarantee for error control.
We use the fixed-sequence testing framework to handle the fact that the threshold pair is selected based on the calibration data.
The proof consists of three main steps.
First, we show that the UCB provides valid coverage for any fixed threshold pair (Eq.~\eqref{eq:ucb-coverage-fixed}).
Second, we extend this coverage to the data-dependent selected pair: the key observation is that $\{R(\hat{u}_1, \hat{u}_2) > \epsilon\} \subseteq \{R(\hat{u}_1, \hat{u}_2) > \hat{L}_{\hat{u}_1,\hat{u}_2}(\alpha)\}$, since the algorithm only selects pairs with $\hat{L}_{\hat{u}_1,\hat{u}_2}(\alpha) \le \epsilon$.
Third, we use the monotonicity of the risk function (Lemma~\ref{lem:monotonicity}) to apply fixed-sequence testing without Bonferroni correction.
The monotonicity ensures that if $R(u_1, u_2) \le \epsilon$ for some pair, then $R(u_1', u_2') \le \epsilon$ for all $u_1' \le u_1$ and $u_2' \le u_2$.
This nested structure allows us to test threshold pairs in a fixed sequence (from largest to smallest) and stop at the first rejection, maintaining the significance level $\alpha$ without multiple testing correction.

\begin{proof}[Proof of Theorem~\ref{thm:main}]


By Assumption~\ref{assump:ucb}, for any fixed threshold pair $(u_1, u_2)$ with $0 \le u_1 \le u_2 \le 1$, the upper confidence bound $\hat{L}_{u_1,u_2}(\alpha)$ satisfies:
\begin{equation}
\label{eq:ucb-coverage-fixed}
\mathbb{P}_{\mathcal{S}_{\text{cal}}} \left( R(u_1, u_2) \le \hat{L}_{u_1,u_2}(\alpha) \right) \ge 1 - \alpha
\end{equation}

This means that with probability at least $1 - \alpha$ over the random draw of the calibration set, the UCB is an upper bound on the true risk.
The challenge is that the threshold pair $(\hat{u}_1, \hat{u}_2)$ selected by Algorithm~\ref{alg:hypac} is not fixed but depends on the calibration data $\mathcal{S}_{\text{cal}}$.
Specifically, the algorithm selects the maximal feasible pair in the \textit{lexicographic} order $(u_2, u_1)$:
\begin{equation*}
(\hat{u}_1, \hat{u}_2) = \arg\max_{(u_1, u_2) \in \mathcal{F}} (u_2, u_1)
\end{equation*}
where $\mathcal{F} = \{(u_1, u_2): \hat{L}_{u_1,u_2}(\alpha) \le \epsilon, 0 \le u_1 \le u_2 \le 1\}$ is the feasible set of threshold pairs whose UCB is below $\epsilon$.
The lexicographic maximum first maximizes $u_2$, then maximizes $u_1$ subject to $u_1 \le u_2$.

The key insight is that the event $\{R(\hat{u}_1, \hat{u}_2) > \epsilon\}$ implies the event $\{R(\hat{u}_1, \hat{u}_2) > \hat{L}_{\hat{u}_1,\hat{u}_2}(\alpha)\}$ because the algorithm only selects pairs with $\hat{L}_{\hat{u}_1,\hat{u}_2}(\alpha) \le \epsilon$.
Formally, suppose $R(\hat{u}_1, \hat{u}_2) > \epsilon$.
Since Algorithm~\ref{alg:hypac} only selects pairs from the feasible set $\mathcal{F}$, we have $\hat{L}_{\hat{u}_1,\hat{u}_2}(\alpha) \le \epsilon$.
Therefore:
\begin{equation*}
R(\hat{u}_1, \hat{u}_2) > \epsilon \ge \hat{L}_{\hat{u}_1,\hat{u}_2}(\alpha)
\end{equation*}

This means:
\begin{equation*}
\{R(\hat{u}_1, \hat{u}_2) > \epsilon\} \subseteq \{R(\hat{u}_1, \hat{u}_2) > \hat{L}_{\hat{u}_1,\hat{u}_2}(\alpha)\}
\end{equation*}

Then
\begin{align*}
\mathbb{P}_{\mathcal{S}_{\text{cal}}} \left( R(\hat{u}_1, \hat{u}_2) > \epsilon \right) &\le \mathbb{P}_{\mathcal{S}_{\text{cal}}} \left( R(\hat{u}_1, \hat{u}_2) > \hat{L}_{\hat{u}_1,\hat{u}_2}(\alpha) \right).
\end{align*}

Now, the event $\{R(\hat{u}_1, \hat{u}_2) > \hat{L}_{\hat{u}_1,\hat{u}_2}(\alpha)\}$ means that the UCB fails to upper bound the true risk for the selected pair.
By the monotonicity of the risk function (Lemma~\ref{lem:monotonicity}), the null hypotheses $H_{u_1,u_2}: R(u_1, u_2) > \epsilon$ have a nested structure: if $H_{u_1,u_2}$ is true, then $H_{u_1',u_2'}$ is also true for all $u_1' \ge u_1$ and $u_2' \ge u_2$.
This enables fixed-sequence testing~\citep{angelopoulos2025learn}.

The key property of fixed-sequence testing is that it controls the family-wise error rate at level $\alpha$ without any correction.
Formally, let $(u_1^{(1)}, u_2^{(1)}), (u_1^{(2)}, u_2^{(2)}), \ldots$ be the sequence of threshold pairs tested by the algorithm (ordered from largest to smallest).
Let $k^*$ be the index of the first pair whose null hypothesis is true (i.e., $R(u_1^{(k^*)}, u_2^{(k^*)}) > \epsilon$).
Then:
\begin{align*}
&\mathbb{P}_{\mathcal{S}_{\text{cal}}}(\text{reject } H_{u_1^{(k^*)},u_2^{(k^*)}}) \\
&= \mathbb{P}_{\mathcal{S}_{\text{cal}}}(\hat{L}_{u_1^{(k^*)},u_2^{(k^*)}}(\alpha) \le \epsilon \mid R(u_1^{(k^*)}, u_2^{(k^*)}) > \epsilon) \\
&\le \alpha
\end{align*}

Since the algorithm stops at the first rejection, it can only make an error if it rejects $H_{u_1^{(k^*)},u_2^{(k^*)}}$.
Therefore:
\begin{equation*}
\mathbb{P}_{\mathcal{S}_{\text{cal}}}(R(\hat{u}_1, \hat{u}_2) > \epsilon) \le \alpha
\end{equation*}

Finally, we verify that Algorithm~\ref{alg:hypac} indeed only selects threshold pairs whose UCB is below $\epsilon$.
By construction, the algorithm searches over the grid $\mathcal{G}$ and selects:
\begin{equation*}
\hat{u}_2 = \max\{u_2 : \exists u_1 \le u_2 \text{ s.t. } \hat{L}_{u_1,u_2}(\alpha) \le \epsilon\}
\end{equation*}
\begin{equation*}
\hat{u}_1 = \max\{u_1 : u_1 \le \hat{u}_2 \text{ and } \hat{L}_{u_1,\hat{u}_2}(\alpha) \le \epsilon\}
\end{equation*}
This ensures that $\hat{L}_{\hat{u}_1,\hat{u}_2}(\alpha) \le \epsilon$ by construction.
If no such pair exists (i.e., $\mathcal{F} = \emptyset$), the algorithm can return $(0, 0)$, which routes all examples to the thinking model and human annotation, guaranteeing zero error.

Combining the three steps, we have shown that:
\begin{equation*}
\mathbb{P}_{\mathcal{S}_{\text{cal}}} \left( R(\hat{u}_1, \hat{u}_2) \le \epsilon \right) \ge 1 - \alpha
\end{equation*}

This completes the proof of Theorem~\ref{thm:main}.
\end{proof}

\begin{remark}[Role of monotonicity in avoiding multiple testing correction]
A naive approach would apply a union bound over all threshold pairs in the grid $\mathcal{G}$, requiring a Bonferroni correction that sets the significance level for each UCB to $\alpha / |\mathcal{G}|$.
This would make the UCBs wider and the selected thresholds more conservative.
However, the monotonicity of the risk function (Lemma~\ref{lem:monotonicity}) allows us to avoid this correction entirely.
The nested structure of the null hypotheses enables fixed-sequence testing: the algorithm tests threshold pairs from largest to smallest, and stops at the first rejection.
By the monotonicity, all pairs with smaller thresholds also satisfy $R \le \epsilon$, so we do not need to test them.
This is why the PAC guarantee holds with significance level $\alpha$, not $\alpha / |\mathcal{G}|$. 
\end{remark}

\subsection{Proof of Theorem~\ref{thm:cost-optimal}}
\label{app:cost-optimal}

\begin{proof}
Fix a calibration set $\mathcal{S}_{\text{cal}}$.
By Lemma~\ref{lem:cost-monotonicity}, both $\hat{C}(u_1, u_2)$ and $\mathbb{E}_{x \sim \mathcal{D}_x}[C(u_1, u_2, x)]$ are non-increasing in both $u_1$ and $u_2$.
Algorithm~\ref{alg:hypac} computes:
\[
(\hat{u}_1, \hat{u}_2) = \arg\min_{(u_1, u_2) \in \mathcal{F}_{\text{PAC}}(\mathcal{S}_{\text{cal}})} \hat{C}(u_1, u_2),
\]
where $\mathcal{F}_{\text{PAC}}(\mathcal{S}_{\text{cal}}) = \{(u_1, u_2) : \hat{L}_{u_1,u_2}(\alpha) \le \epsilon, \; 0 \le u_1 \le u_2 \le 1\}$.
Since $\hat{C}(u_1, u_2)$ is non-increasing in both arguments, the minimum over $\mathcal{F}_{\text{PAC}}(\mathcal{S}_{\text{cal}})$ is achieved at the maximum feasible thresholds.
By the same monotonicity, $\mathbb{E}_{x \sim \mathcal{D}_x}[C(u_1, u_2, x)]$ is also minimized at the maximum feasible thresholds.
Therefore,
\begin{equation*}
\mathbb{E}_{x \sim \mathcal{D}_x}[C(\hat{u}_1, \hat{u}_2, x)] = \min_{(u_1, u_2) \in \mathcal{F}_{\text{PAC}}(\mathcal{S}_{\text{cal}})} \mathbb{E}_{x \sim \mathcal{D}_x}[C(u_1, u_2, x)].
\end{equation*}
Note that $\mathcal{F}_{\text{PAC}}(\mathcal{S}_{\text{cal}})$ depends on $\mathcal{S}_{\text{cal}}$ via empirical bounds, but $\mathbb{E}_{x \sim \mathcal{D}_x}[C(u_1, u_2, x)]$ is a population quantity independent of $\mathcal{S}_{\text{cal}}$.
\end{proof}

\subsection{Finite-sample guarantee}
\label{app:empirical-risk}

While Theorem~\ref{thm:main} provides a guarantee on the true risk $R(\hat{u}_1, \hat{u}_2)$, in practice we often want to evaluate the performance on a finite test set.
We now provide a bound on the empirical risk that accounts for the finite test set size.

\begin{theorem}[Empirical risk PAC guarantee]
\label{thm:empirical}
Let $(\hat{u}_1, \hat{u}_2)$ be the threshold pair selected by Algorithm~\ref{alg:hypac} with error tolerance $\epsilon$ and significance level $\alpha$.
Let $\mathcal{S}_{\text{test}} = \{x_j\}_{j=1}^N$ be a test set drawn independently from the same distribution as the calibration set.
Define the empirical risk as:
\begin{equation*}
\hat{R}(\hat{u}_1, \hat{u}_2) = \frac{1}{N} \sum_{j=1}^N \ell(y_j, T_{\hat{u}_1,\hat{u}_2}(x_j))
\end{equation*}
Assuming the loss function is bounded in $[0, B]$, for any $t > 0$:
\begin{equation*}
\mathbb{P}_{\mathcal{S}_{\text{cal}}, \mathcal{S}_{\text{test}}} \left( \hat{R}(\hat{u}_1, \hat{u}_2) \le \epsilon + t \right) \ge 1 - \alpha - (1-\alpha)\exp\left(-\frac{2Nt^2}{B^2}\right)
\end{equation*}
where the probability is taken over both the calibration set and the test set.
\end{theorem}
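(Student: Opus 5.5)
The plan is to condition on the calibration set and split on the ``good'' event from Theorem~\ref{thm:main}. Let $G = \{R(\hat{u}_1, \hat{u}_2) \le \epsilon\}$, an event measurable with respect to $\mathcal{S}_{\text{cal}}$ (together with the sampling indicators $Z_i$). By Theorem~\ref{thm:main}, $\mathbb{P}_{\mathcal{S}_{\text{cal}}}(G) \ge 1-\alpha$. Since $\mathcal{S}_{\text{test}}$ is drawn independently of $\mathcal{S}_{\text{cal}}$, conditionally on $\mathcal{S}_{\text{cal}}$ the thresholds $(\hat{u}_1, \hat{u}_2)$ are fixed. The summands $\ell(y_j, T_{\hat{u}_1,\hat{u}_2}(x_j))$, $j=1,\dots,N$, are therefore i.i.d., take values in $[0,B]$, and have conditional mean exactly $R(\hat{u}_1, \hat{u}_2)$.

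Next apply Hoeffding's inequality conditionally. On $G$,
\begin{equation*}
\mathbb{P}\bigl(\hat{R}(\hat{u}_1,\hat{u}_2) > \epsilon + t \mid \mathcal{S}_{\text{cal}}\bigr) \le \mathbb{P}\bigl(\hat{R}(\hat{u}_1,\hat{u}_2) - R(\hat{u}_1,\hat{u}_2) > t \mid \mathcal{S}_{\text{cal}}\bigr) \le \exp\left(-\frac{2Nt^2}{B^2}\right).
\end{equation*}
Writing $\delta = \exp(-2Nt^2/B^2)$ and taking expectation over $\mathcal{S}_{\text{cal}}$ gives
\begin{equation*}
\mathbb{P}\bigl(\hat{R} \le \epsilon+t\bigr) \ge \mathbb{E}\bigl[\mathbf{1}_G\,\mathbb{P}(\hat{R}\le \epsilon+t \mid \mathcal{S}_{\text{cal}})\bigr] \ge \mathbb{P}(G)(1-\delta).
\end{equation*}

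To finish, note that the right-hand side $\mathbb{P}(G)(1-\delta)$ is increasing in $\mathbb{P}(G)$, so $\mathbb{P}(G)\ge 1-\alpha$ yields the lower bound $(1-\alpha)(1-\delta) = 1-\alpha-(1-\alpha)\delta$. This is exactly the stated bound. On $G^c$ we simply drop the contribution, bounding it below by zero. That step is what produces the factor $(1-\alpha)$ in front of the exponential rather than a plain union bound $1-\alpha-\delta$.

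The only delicate step is justifying that Hoeffding may be applied with the random, data-dependent thresholds. I would handle this by conditioning: independence of the test set from the calibration set, together with the fixed pre-trained models $\tilde{f}_1,\tilde{f}_2$ and the fixed score $U$ assumed in Theorem~\ref{thm:main}, makes $T_{\hat{u}_1,\hat{u}_2}$ a deterministic map given $\mathcal{S}_{\text{cal}}$. Human-routed points contribute loss $\ell(y,y)=0$, which still lies in $[0,B]$, so boundedness holds uniformly. Everything else is routine.
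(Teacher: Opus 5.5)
Your proposal is correct and follows essentially the same route as the paper: split on the event $\{R(\hat{u}_1,\hat{u}_2)\le\epsilon\}$, drop the complementary term, apply Hoeffding on the good event, and invoke Theorem~\ref{thm:main} to obtain $(1-\alpha)\bigl(1-\exp(-2Nt^2/B^2)\bigr)$. Where the paper conditions only on that event, you condition on the full calibration set; this is a slightly cleaner justification for applying Hoeffding with data-dependent thresholds, but the argument is the same.
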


This theorem tells us how to interpret the empirical error rate on a finite test set: if we observe $\hat{R}(\hat{u}_1, \hat{u}_2)$, the true error rate is close to this value, with deviation controlled by $N$.

\begin{proof}[Proof of Theorem~\ref{thm:empirical}]

The proof proceeds as follows.
We first establish a concentration bound for the empirical risk around the true risk.
We then combine this with the PAC guarantee from Theorem~\ref{thm:main} using a union bound argument.

Fix a threshold pair $(u_1, u_2)$ and consider the empirical risk:
\begin{equation*}
\hat{R}(u_1, u_2) = \frac{1}{N} \sum_{j=1}^N \ell(y_j, T_{u_1,u_2}(x_j))
\end{equation*}

Since the test examples are drawn independently from $\mathcal{D}$, the losses are i.i.d.\ random variables with:
\begin{equation*}
\mathbb{E}_{x_j \sim \mathcal{D}}[\ell(y_j, T_{u_1,u_2}(x_j))] = R(u_1, u_2)
\end{equation*}

Since the loss is bounded in $[0, B]$, Hoeffding's inequality gives:
\begin{equation*}
\mathbb{P}\left( \frac{1}{N} \sum_{i=1}^N X_i - \mathbb{E}\left[\frac{1}{N} \sum_{i=1}^N X_i\right] \ge t \right) \le \exp\left( -\frac{2N^2 t^2}{\sum_{i=1}^N (b_i - a_i)^2} \right)
\end{equation*}

With $b_j - a_j = B$ for all $j$, we have $\sum_{j=1}^N (b_j - a_j)^2 = NB^2$, so:
\begin{equation*}
\mathbb{P}_{\mathcal{S}_{\text{test}}}\left( \hat{R}(u_1, u_2) - R(u_1, u_2) > t \right) \le \exp\left( -\frac{2Nt^2}{B^2} \right)
\end{equation*}

Now we combine this with Theorem~\ref{thm:main}.
We decompose the probability by conditioning on whether $R(\hat{u}_1, \hat{u}_2) \le \epsilon$:
\begin{align*}
&\mathbb{P}_{\mathcal{S}_{\text{cal}}, \mathcal{S}_{\text{test}}}\left( \hat{R}(\hat{u}_1, \hat{u}_2) \le \epsilon + t \right) \\
&= \mathbb{P}_{\mathcal{S}_{\text{cal}}, \mathcal{S}_{\text{test}}}\left( \hat{R} \le \epsilon + t \mid R \le \epsilon \right) \cdot \mathbb{P}_{\mathcal{S}_{\text{cal}}}\left( R \le \epsilon \right) \\
&\quad + \mathbb{P}_{\mathcal{S}_{\text{cal}}, \mathcal{S}_{\text{test}}}\left( \hat{R} \le \epsilon + t \mid R > \epsilon \right) \cdot \mathbb{P}_{\mathcal{S}_{\text{cal}}}\left( R > \epsilon \right)
\end{align*}
where we write $R = R(\hat{u}_1, \hat{u}_2)$ and $\hat{R} = \hat{R}(\hat{u}_1, \hat{u}_2)$ for brevity.

Since the second term is non-negative, we have:
\begin{equation*}
\mathbb{P}_{\mathcal{S}_{\text{cal}}, \mathcal{S}_{\text{test}}}\left( \hat{R} \le \epsilon + t \right) \ge \mathbb{P}_{\mathcal{S}_{\text{cal}}, \mathcal{S}_{\text{test}}}\left( \hat{R} \le \epsilon + t \mid R \le \epsilon \right) \cdot \mathbb{P}_{\mathcal{S}_{\text{cal}}}\left( R \le \epsilon \right)
\end{equation*}

When $R \le \epsilon$, the event $\{\hat{R} \le \epsilon + t\}$ is implied by $\{\hat{R} - R \le t\}$.
By Hoeffding's inequality:
\begin{equation*}
\mathbb{P}_{\mathcal{S}_{\text{test}}}\left( \hat{R} \le \epsilon + t \mid R \le \epsilon \right) \ge \mathbb{P}_{\mathcal{S}_{\text{test}}}\left( \hat{R} - R \le t \right) \ge 1 - \exp\left(-\frac{2Nt^2}{B^2}\right)
\end{equation*}

By Theorem~\ref{thm:main}, $\mathbb{P}_{\mathcal{S}_{\text{cal}}}(R \le \epsilon) \ge 1 - \alpha$.

Combining these:
\begin{align*}
\mathbb{P}_{\mathcal{S}_{\text{cal}}, \mathcal{S}_{\text{test}}}\left( \hat{R} \le \epsilon + t \right) &\ge \left(1 - \exp\left(-\frac{2Nt^2}{B^2}\right)\right) \cdot (1 - \alpha) \\
&= 1 - \alpha - \exp\left(-\frac{2Nt^2}{B^2}\right) + \alpha \exp\left(-\frac{2Nt^2}{B^2}\right) \\
&= 1 - \alpha - (1 - \alpha)\exp\left(-\frac{2Nt^2}{B^2}\right)
\end{align*}

This completes the proof of Theorem~\ref{thm:empirical}.
\end{proof}

\section{UCB validity examples}
\label{app:ucb-validity}

In this section, we provide concrete examples of how to construct valid upper confidence bounds (UCBs) that satisfy Assumption~\ref{assump:ucb}.
The UCB must satisfy: for any fixed threshold pair $(u_1, u_2)$,
\begin{equation*}
\mathbb{P}_{\mathcal{S}_{\text{cal}}} \left( R(u_1, u_2) \le \hat{L}_{u_1,u_2}(\alpha) \right) \ge 1 - \alpha
\end{equation*}

\textbf{CLT-based UCB.}
If we use the CLT-based UCB from Eq.~\eqref{eq:ucb}:
\begin{equation*}
\hat{L}_{u_1,u_2}(\alpha) = \hat{R}_{\text{IS}}(u_1, u_2) + z_{1-\alpha} \frac{\hat{\sigma}_W(u_1, u_2)}{\sqrt{m}}
\end{equation*}
where $\hat{R}_{\text{IS}}(u_1, u_2)$ is the importance-weighted risk estimate and $\hat{\sigma}_W(u_1, u_2)$ is the sample standard deviation of the importance-weighted losses.
By the central limit theorem, for large enough $m$:
\begin{equation*}
\frac{\hat{R}_{\text{IS}}(u_1, u_2) - R(u_1, u_2)}{\hat{\sigma}_W(u_1, u_2) / \sqrt{m}} \xrightarrow{d} \mathcal{N}(0, 1)
\end{equation*}
This implies that:
\begin{equation*}
\mathbb{P}\left( R(u_1, u_2) \le \hat{R}_{\text{IS}}(u_1, u_2) + z_{1-\alpha} \frac{\hat{\sigma}_W(u_1, u_2)}{\sqrt{m}} \right) =  1 - \alpha + o(1). 
\end{equation*}

\textbf{Hoeffding-based UCB.}
For finite samples, we can use Hoeffding's inequality to obtain a non-asymptotic bound.
Since the loss is bounded, the importance-weighted losses $W_i = \frac{Z_i}{p} \ell(y_i, T_{u_1,u_2}(x_i))$ are bounded in $[0, B/p]$.
By Hoeffding's inequality:
\begin{equation*}
\mathbb{P}\left( \hat{R}_{\text{IS}}(u_1, u_2) - R(u_1, u_2) \ge t \right) \le \exp\left( -\frac{2m t^2}{(B/p)^2} \right)
\end{equation*}
Setting the right-hand side equal to $\alpha$ and solving for $t$:
\begin{equation*}
t = \frac{B}{p\sqrt{2m}} \sqrt{\log(1/\alpha)}
\end{equation*}
This gives us a Hoeffding-based UCB:
\begin{equation*}
\hat{L}_{u_1,u_2}^{\text{Hoeff}}(\alpha) = \hat{R}_{\text{IS}}(u_1, u_2) + \frac{B}{p\sqrt{2m}} \sqrt{\log(1/\alpha)}
\end{equation*}
which satisfies:
\begin{equation*}
\mathbb{P}_{\mathcal{S}_{\text{cal}}} \left( R(u_1, u_2) \le \hat{L}_{u_1,u_2}^{\text{Hoeff}}(\alpha) \right) \ge 1 - \alpha
\end{equation*}

\textbf{Empirical Bernstein bound.}
A tighter bound can be obtained using the empirical Bernstein inequality, which adapts to the variance of the data.
Let $\hat{V} = \frac{1}{m-1} \sum_{i=1}^m (W_i - \hat{R}_{\text{IS}})^2$ be the sample variance.
The empirical Bernstein UCB is:
\begin{equation*}
\hat{L}_{u_1,u_2}^{\text{Bern}}(\alpha) = \hat{R}_{\text{IS}}(u_1, u_2) + \sqrt{\frac{2\hat{V} \log(2/\alpha)}{m}} + \frac{7B \log(2/\alpha)}{3(m-1)p}
\end{equation*}
This bound is tighter than Hoeffding when the variance is small.

\textbf{Betting-based bound.}
We also consider a betting-based upper confidence bound adapted from the finite-sample confidence sequence of \citet{waudby-smith2024estimating}.
Let
\[
W_i = \frac{Z_i}{p}\,\ell\!\left(y_i, T_{u_1,u_2}(x_i)\right),
\qquad
0 \le W_i \le \frac{1}{p},
\]
and define the normalized variables $X_i = p W_i \in [0,1]$.
Let $\mu = \mathbb{E}[X_i] = p\,R(u_1,u_2)$.

For $t=1,\dots,m$, define
\[
\hat{\mu}_t
= \frac{0.5 + \sum_{j=1}^t X_j}{t+1},
\qquad
\hat{\sigma}_t^2
= \frac{0.25 + \sum_{j=1}^t (X_j - \hat{\mu}_t)^2}{t+1},
\qquad
\lambda_t
= \sqrt{\frac{2\log(2/\alpha)}{m\,\hat{\sigma}_{t-1}^2}}.
\]

For any candidate mean $m \in [0,1]$, define
\[
M_t(m)
=
\frac{1}{2}
\max\!\left\{
\prod_{j=1}^t
\left(1 + \min\!\left(\lambda_j, \frac{0.5}{m}\right)(X_j - m)\right),
\;
\prod_{j=1}^t
\left(1 - \min\!\left(\lambda_j, \frac{0.5}{1-m}\right)(X_j - m)\right)
\right\}.
\]

Define the confidence set
\[
\mathcal{C}
=
\bigcap_{t=1}^m
\left\{
m \in [0,1] : M_t(m) < \frac{1}{\alpha}
\right\}.
\]

Let \[\mathcal{C}_R := \{c/p : c\in\mathcal{C}\}.\]
Then,
\[
\mathbb{P}\!\left( R(u_1,u_2) \in \mathcal{C}_R \right) \ge 1-\alpha.
\]

Intuitively, the supermartingale $M_t(m)$ should be thought of as the amount of evidence against $m$ being the true mean. That is, $M_t(m)$ being big suggests that $m$ is unlikely to be the true mean, so the final confidence set is the collection of all $m$ for which the amount of such evidence is small.

\section{Extension to more than three annotation sources}
\label{sec:extension}

Our framework naturally extends to systems with more than three annotation sources.
Suppose we have $K$ annotation sources $\tilde{f}_1, \ldots, \tilde{f}_{K-1}, y$ with cost functions $c_1(x), c_2(x), \ldots, c_{K-1}(x), c_K(x)$ satisfying $c_1(x) \le c_2(x) \le \cdots \le c_K(x)$ for almost all $x$, and accuracies that increase with cost.

We use $K-1$ thresholds $u_1 \le u_2 \le \cdots \le u_{K-1}$ to partition the input space into $K$ regions.
The deployment mapping becomes:
\begin{equation*}
T_{u_1,\ldots,u_{K-1}}(x) = \begin{cases}
\tilde{f}_1(x) & \text{if } U(x) \le u_1 \\
\tilde{f}_2(x) & \text{if } u_1 < U(x) \le u_2 \\
\vdots \\
\tilde{f}_{K-1}(x) & \text{if } u_{K-2} < U(x) \le u_{K-1} \\
y & \text{if } U(x) > u_{K-1}
\end{cases}
\end{equation*}

The risk function is:
\begin{equation*}
R(u_1, \ldots, u_{K-1}) = \mathbb{E}_{(x,y) \sim \mathcal{D}}[\ell(y, T_{u_1,\ldots,u_{K-1}}(x))]
\end{equation*}

The monotonicity result extends naturally: $R$ is monotone non-decreasing in each threshold $u_k$, since increasing any threshold routes more examples to cheaper (less accurate) sources.
The calibration procedure also extends: we construct UCBs for all threshold tuples $(u_1, \ldots, u_{K-1})$ in a discretized grid, then select the optimal tuple that minimizes cost while satisfying the risk constraint.
The PAC guarantee carries over: with probability at least $1 - \alpha$, the selected thresholds satisfy $R(u_1, \ldots, u_{K-1}) \le \epsilon$.
The main challenge is computational: the number of threshold tuples grows exponentially with $K$, which increases calibration cost and may require larger calibration sets.
In practice, $K = 3$ or $K = 4$ sources are typically sufficient.

\section{Score}
\label{app:score}

A key component of our hybrid annotation system is the score $U(x)$ that measures the uncertainty of the annotation system for input $x$.
This score is used to decide which annotation source to use: examples with low scores can be handled by cheaper models, while examples with high scores should be sent to more accurate sources.

Intuitively, this uncertainty may relate to the difficulty of the input, but our theory does not require such an assumption.
A more accurate score leads to a more efficient calibration process.
In practice, model uncertainty serves as a natural proxy for this purpose \citep{huang2025look,xiong2023can,hu2024uncertainty}.
When a model is uncertain about its prediction, this often indicates that the input is inherently ambiguous, lies near decision boundaries, or requires deeper reasoning that the model cannot perform reliably.
Such examples are precisely those that benefit from more capable annotation sources.

We consider two main approaches to computing the score for large language models, both of which can be used within our framework.

\textbf{Logits-based score}.
For autoregressive language models that generate sequences token by token, we can use the model's output probabilities to compute the score \citep{guo2017calibration}.
Let $y = (y_1, y_2, \ldots, y_l)$ be a generated sequence of length $l$.
The model produces a probability distribution over the next token at each step: $\mathbb{P}(y_j \mid y_1, \ldots, y_{j-1}, x)$.
We define the logits-based score as
\begin{equation*}
\label{eq:logits-score}
U_{\text{logits}}(x) = 1 - \frac{1}{l} \sum_{j=1}^l \mathbb{P}(y_j \mid y_1, \ldots, y_{j-1}, x)
\end{equation*}
where $y_j$ is the token actually generated by the model at position $j$.
This score measures the average confidence of the model across all generated tokens.
When the model is very confident about each token (high probabilities), the score is low.
When the model is less confident (low probabilities), the score is high, indicating an example that may benefit from a more capable annotation source.

This approach has several advantages: it is model-agnostic, requires no additional computation beyond standard inference, and provides a fine-grained measure at the token level.
However, it requires access to the model's logits, which may not always be available for API-based models.

\textbf{Verbalized score}.
An alternative approach is to ask the model to explicitly express its confidence in natural language \citep{yang2024verbalized,tian2023just}.
We can prompt the model to output a confidence level along with its prediction.
We then map these verbalized confidence levels to numerical scores in $[0, 1]$.

For example, we might use a prompt like: `Please provide your answer and indicate your confidence level (high/medium/low).'
We then parse the model's response to extract the confidence level and map it to a score: high confidence $\to$ low score, low confidence $\to$ high score.

This approach has the advantage of working with black-box API models where we do not have access to logits.
It also allows the model to use its reasoning capabilities to assess the input, potentially capturing semantic information that may not be reflected in token-level probabilities.
However, it requires careful prompt engineering and may add overhead to the inference process.

\textbf{Alternative scores}.
Our framework is flexible and can work with other methods for computing the score.
One alternative is ensemble-based scoring, where we run the model multiple times with different random seeds or sampling parameters and measure the disagreement among the outputs, with higher disagreement indicating higher uncertainty.
Another alternative is calibrated scoring, where we use a separate calibration model trained to predict the probability that the main model's output is correct based on features of the input and output.
We can also use hybrid approaches that combine multiple scoring methods to get a more robust estimate.

The choice of scoring method depends on the specific application, the available model APIs, and the computational budget.
In our experiments, we focus on logits-based and verbalized scores as they are widely applicable and easy to implement.

\section{Compared baselines}
\label{sec:baselines}
In this work, we compare HyPAC against PAC labeling \citep{candes2025probably}, Cascaded Selective Evaluation (CSE) \citep{jung2024trust}, and a heuristic baseline.
PAC Labeling leverages a confidence threshold to route each example either to an LLM for automatic labeling or to an expert annotator.
CSE performs sequential selective prediction by escalating inputs to increasingly expensive models based on confidence thresholds, without providing an explicit finite-sample guarantee on the final annotation error.
For the heuristic baseline, we select hyperparameters $\hat{u}_1$ and $\hat{u}_2$ to minimize annotation cost while ensuring that the empirical risk on the calibration data does not exceed $\epsilon$.

\paragraph{Probably Approximately Correct (PAC) Labels \citep{candes2025probably}.}
PAC labeling is a selective annotation framework that combines inexpensive model-generated labels with expert annotations while providing a dataset-level accuracy guarantee. Given a target error level $\epsilon$ and tolerance $\alpha$, the method selectively escalates high-uncertainty samples to human annotators, ensuring that the overall labeling error is bounded with high probability.
The original PAC labeling formulation assumes a fixed dataset and does not rely on a separate calibration set. To make it compatible with our framework, we introduce a slight modification and reformulate it as a two-stage PAC labeling procedure.

\begin{algorithm}[t]
\caption{Two-Stage PAC Labeling with Uncertainty Thresholding}
\label{alg:pac_labeling}
\textbf{Input:} Calibration set $\mathcal I_{\text{cal}}$, test set $\mathcal I_{\text{test}}$,
LLM $\tilde f$, expert $f$, loss $\ell$,
error tolerance $\epsilon$, confidence level $\alpha$. \\
\textbf{Output:} Composite predictor $\hat f$.
\begin{algorithmic}[1]
\STATE For all $i \in \mathcal I_{\text{cal}}$, compute $\tilde y_i = \tilde f(x_i)$ and uncertainty scores $U_i$.
\STATE For each $u \in \{U_i : i \in \mathcal I_{\text{cal}}\}$, compute $\hat L_u(\alpha)$ as in Eq.(\ref{eq:ucb}).
\STATE Set
\[
\hat u = \max\{u : \hat L_u(\alpha) \le \epsilon\}.
\]
\STATE For all $i \in \mathcal I_{\text{test}}$, compute $\tilde y_i = \tilde f(x_i)$ and $U_i$.
\STATE Define
\[
\hat f(x_i) =
\begin{cases}
\tilde f(x_i), & U_i < \hat u, \\
f(x_i), & U_i \ge \hat u.
\end{cases}
\quad \forall i \in \mathcal I_{\text{test}}.
\]
\STATE \textbf{Return} $\hat f$.
\end{algorithmic}
\end{algorithm}

\paragraph{Cascaded Selective Evaluation (CSE).}
The CSE guarantees a target level of loss control for LLM response while reducing evaluation cost by escalating only uncertain cases to larger models.
\footnote{
Throughout this work, we use $\epsilon$ to denote the target error level, and $\alpha$ to denote the tolerance level controlling
the probability with which this guarantee holds. This notation is different from the CSE paper.
}
Under the i.i.d.\ data assumption, Cascaded Selective Evaluation (CSE) admits a rigorous theoretical guarantee: with probability $1-\alpha$,
\[
\mathbb{P}\!\left(f_{\mathrm{CSE}}(x)\neq y_{\mathrm{human}}
\;\middle|\; x \text{ is labeled by } \tilde{f}_1 \text{ or } \tilde{f}_2\right)\le \delta.
\]
Note that in the original CSE implementation, expert annotations are not considered. All data instances that are not annotated by the LLM are directly escalated. 
To make the method compatible with our setup, we use expert annotations for escalated instances.

\paragraph{CoAnnotating \citep{li2023coannotating}.}
We consider a cost-aware heuristic that selects thresholds by directly optimizing empirical performance on a calibration set, without formal risk control. 
The method partitions examples according to their uncertainty scores and searches for threshold pairs that minimize total annotation cost while keeping the empirical loss below a target level. 
The selected thresholds are then applied to the test set using the same escalation rule. 
The full procedure is summarized in Algorithm~\ref{alg:heuristic_mean_baseline}.

\begin{algorithm}[t!]
\caption{Heuristic Mean Baseline}
\label{alg:heuristic_mean_baseline}
\begin{algorithmic}[1]
\REQUIRE
Calibration set $\mathcal{D}_{\mathrm{cal}}: \{(x_i, y_i)\}_{i\in[m]}$, test set $\mathcal{D}_{\mathrm{test}}$: $\{(x_{m+j}, y_{m+j})\}_{j\in[n]}$; \\
Three annotators: non-thinking model $\tilde{f}_1$, thinking model $\tilde{f}_2$, and expert annotator; \\
Uncertainty scores $U(x)\in[0,1]$; \\
Per-annotator cost array $\mathrm{cost}_i(x)$; \\
Target error threshold $\epsilon$
\ENSURE
Predicted labels on $\mathcal{D}_{\mathrm{test}}$

\STATE \textbf{Calibration phase:}
\STATE Construct a grid $\mathcal{U}$ from the unique values of $\{U(x):x\in\mathcal{D}_{\mathrm{cal}}\}$
\STATE Initialize an empty candidate set $\mathcal{T}$

\FOR{each $u_1 \in \mathcal{U}$}
    \STATE Search for the largest $u_2 \ge u_1$ such that
    \[
    \text{Mean}(\{L(x_i, u_1, u_2)\}_{i\in[m]}) \le \epsilon
    \]
    \IF{such $u_2$ exists}
        \STATE Add $(u_1,u_2)$ to $\mathcal{T}$
    \ELSE
        \STATE \textbf{break}
    \ENDIF
\ENDFOR

\STATE Select $(u_1^\star,u_2^\star)\in\mathcal{T}$ minimizing the total calibration cost

\STATE \textbf{Evaluation phase:}
\FOR{each $(x, y)\in\mathcal{D}_{\mathrm{test}}$}
    \IF{$U(x)\le u_1^\star$}
        \STATE Output prediction of $\tilde{f_1}$
    \ELSIF{$u_1^\star < U(x)\le u_2^\star$}
        \STATE Output prediction of $\tilde{f}_2$
    \ELSE
        \STATE Output expert label
    \ENDIF
\ENDFOR
\end{algorithmic}
\end{algorithm}

\section{Implementation details}
\label{sec:implementation}
\paragraph{Hyperparameter settings of LLMs}
\label{para:hyperparameter}
In this study, we configure the decoding parameters as follows: for Qwen/Qwen3-4B-Instruct-2507, we set Temperature=0.7, TopP=0.8, TopK=20, and MinP=0; for Qwen/Qwen3-4B-Thinking-2507, we set Temperature=0.6, TopP=0.95, TopK=20, and MinP=0.
For Llama/Llama-3.1-8B-Instruct, we set Temperature=0.7, TopP=0.8, and MinP=0; for DeepSeek/DeepSeek-R1-Distill-Llama-8B, we set Temperature=0.6, TopP=0.95, and MinP=0.
For Qwen/Qwen2.5-32B-Instruct, we set Temperature=0.7, TopP=0.8, and MinP=0; for DeepSeek/DeepSeek-R1-Distill-Qwen-32B, we set Temperature=0.6, TopP=0.95, and MinP=0.
For experiments on the MATH-500 dataset, we additionally apply a presence penalty of 1.5 to avoid repetitive generation.
Experiments were run on NVIDIA GeForce RTX 4090, NVIDIA L40, and NVIDIA RTX Pro 6000.

\paragraph{Dataset details}
Table~\ref{tab:datasets_split} summarizes the datasets employed in our experiments, together with their corresponding splitting strategies.
For each dataset, we report its type, overall size, and the partitioning into calibration and test sets.
In Table~\ref{tab:accuracy}, we report the accuracy of LLMs on these annotation datasets.

\begin{table*}[t]
\centering
\caption{Accuracy (\%) of different models on annotation datasets.}
\label{tab:accuracy}

\setlength{\tabcolsep}{2pt} 
\resizebox{1.13\textwidth}{!}{%
\begin{tabular}{lcccccc}
\toprule
Dataset / LLM
& Qwen3-4B-Instruct-2507
& Qwen3-4B-Thinking-2507
& Llama-3.1-8B-Instruct
& DeepSeek-R1-Distill-Llama-8B
& Qwen2.5-32B-Instruct
& DeepSeek-R1-Distill-Qwen-32B \\
\midrule
MMLU-Redux
& 79.83 & 81.57 & 64.77 & 67.83 & 79.73 & 84.93 \\
MATH-500
& 91.80 & 96.00 & 46.80 & 87.20 & 80.20 & 92.60 \\
MATH-L5
& 91.12 & 94.73 & 20.53 & 80.72 & 56.17 & 93.62 \\
ZebraLogic
& 79.10 & 89.00 & 13.2 & 34.80 & 24.50 & 68.60 \\
HumanEval$^+$
& 92.68 & 97.56 & 92.68 & 93.90 & 82.32 & 99.39 \\
\bottomrule
\end{tabular}
}

\end{table*}

\begin{table}[t]
\centering
\caption{Details of datasets and splitting settings for annotation experiments.}
\label{tab:datasets_split}
\begin{tabular}{llccc}
\toprule
Dataset & Dataset Type & Dataset Size & Split Setting & Size \\
\midrule
\multirow{2}{*}{MMLU-Redux} 
& \multirow{2}{*}{General Task} 
& \multirow{2}{*}{3000} 
& Calibration & 500 \\
& & & Test & 2500 \\
\midrule
\multirow{2}{*}{MATH-500} 
& \multirow{2}{*}{Math Reasoning} 
& \multirow{2}{*}{500} 
& Calibration & 300 \\
& & & Test & 200 \\
\midrule
\multirow{2}{*}{MATH-L5} 
& \multirow{2}{*}{Math Reasoning} 
& \multirow{2}{*}{721} 
& Calibration & 300 \\
& & & Test & 421 \\
\midrule
\multirow{2}{*}{Zebra-Logic} 
& \multirow{2}{*}{Text Reasoning} 
& \multirow{2}{*}{750} 
& Calibration & 450 \\
& & & Test & 300 \\
\midrule
\multirow{2}{*}{HumanEval+} 
& \multirow{2}{*}{Coding Task} 
& \multirow{2}{*}{164} 
& Calibration & 100 \\
& & & Test & 64 \\
\bottomrule
\end{tabular}
\end{table}

\paragraph{Prompts for verbalized confidence}
In Table~\ref{tab:verbalized_score_prompt}, we present the prompt used to elicit the verbalized confidence scores. After ten trials, we obtained the average confidence score and defined the verbalized uncertainty score as $1$ minus this average confidence.
For PAC labeling baseline on $\text{HumanEval}^+$, we find that using this prompt, we fail to extract the verbalized confidence for Qwen3-4B-Thinking-2507.
Therefore, on $\text{HumanEval}^+$, we instead use Qwen3-4B-Instruct-2507 to obtain verbalized confidence for the outputs generated by Qwen3-4B-Thinking-2507.

\begin{table}
\caption{Prompt for the verbalized confidence scores.}
\label{tab:verbalized_score_prompt}
\renewcommand{\arraystretch}{1.4}
\rowcolors{1}{Gray}{Gray}

\resizebox{\textwidth}{!}{
\begin{tabular}{!{\vrule width 1.2pt} p{0.95\textwidth} !{\vrule width 1.2pt}}
\textbf{System prompt:} You are a reasoning assistant. For each question and proposed answer, you must estimate how likely the proposed answer is correct.\\

\textbf{User prompt:}\\
Question: \{QUESTION\}\\
Answer: \{ANSWER\}\\
Provide a probability (between 0.0 and 1.0) that your answer is correct. Only output the probability.\\
\end{tabular}
}

\end{table}

\paragraph{API pricing parameters}
We evaluate API cost functions using two model pairs: (i) Qwen2.5-32B-Instruct and DeepSeek-R1-Distill-Qwen-32B, and (ii) Llama-3.1-70B-Instruct and DeepSeek-R1-Distill-Llama-70B.
For the Qwen2.5-32B-Instruct and DeepSeek-R1-Distill-Qwen-32B models, both input and output token prices are identical under Aliyun’s pricing scheme, with input tokens priced at 0.002 Chinese Yuan per thousand tokens and output tokens priced at 0.006 Chinese Yuan per thousand tokens.
For the 70B model pair, we adopt the API pricing provided by Deeplnfra.
The Llama-3.1-70B-Instruct model is priced at \$0.40 per thousand tokens for both input and output, while DeepSeek-R1-Distill-Llama-70B is priced at \$0.80 per thousand tokens for both input and output.


\paragraph{Hyperparameter setting for empirical comparison between HyPAC and CSE.}
We use dataset-specific hyperparameters for CSE to attain comparable errors to HyPAC.
For logits-based uncertainty score, on MMLU-Redux, we set $\alpha=0.07$ and $\epsilon=0.15$; on MATH-500, we set $\alpha=0.08$ and $\epsilon=0.10$; on MATH-L5, we set $\alpha=0.1$ and $\epsilon=0.14$; on Zebra-Logic, we set $\alpha=0.10$ and $\epsilon=0.10$; on $\text{HumanEval}^+$, we set $\alpha=0.10$ and $\epsilon=0.12$.
For verbalized uncertainty score, on MMLU-Redux, we set $\alpha=0.1$ and $\epsilon=0.15$; on MATH-500, we set $\alpha=0.1$ and $\epsilon=0.14$; on MATH-L5, we set $\alpha=0.1$ and $\epsilon=0.1$; on Zebra-Logic, we set $\alpha=0.10$ and $\epsilon=0.10$; on $\text{HumanEval}^+$, we set $\alpha=0.05$ and $\epsilon=0.05$.

\section{Extensive Study}

\subsection{Performance of HyPAC under the semantic cosine-distance loss function}
\label{subsec:semantic_loss}

\paragraph{HyPAC tightly controls the annotation error under the semantic cosine-distance loss function.}
In our main experiment, we use the binary 0--1 loss function.
In this subsection, we adopt the semantic cosine-distance loss function, which measures the semantic similarity
between outputs in the embedding space.
Formally, given the human-annotated labels y and the LLM-annotated label $\hat{y}$, we obtain their embeddings $v_y$ and $v_{\hat{y}}$ using a semantic embedding model, and define the loss as: 
\begin{equation}
\label{eq:semantic_loss}
    \ell(y_i, \hat{y}_i) = 1 - \frac{v_y \cdot v_{\hat{y}}}{\|v_y\| \|v_{\hat{y}}\|}.
\end{equation}
For the semantic embedding model, we adopt ``Qwen/Qwen3-Embedding-8B''~\citep{yang2025qwen3}.
We conduct experiments on MATH-500, which provides ground-truth chains of thought for each question.
We compute the cosine distance between the LLM-generated output and the ground-truth chain of thought to measure their semantic similarity.

In Figure~\ref{fig:semantic_math500_results}, we present the annotation results of HyPAC on MATH-500 under the semantic cosine-distance loss function. The results demonstrate that HyPAC tightly controls the annotation loss while substantially reducing the annotation cost under the semantic loss. Notably, the annotation error is also low, indicating that the semantic loss provides a reliable measure of annotation quality. 
Overall, these empirical results show that HyPAC achieves effective cost reduction without sacrificing annotation accuracy under the semantic loss function.

\begin{figure}[t]
    \centering

    \begin{subfigure}[b]{0.32\linewidth}
        \centering
        \includegraphics[height=3.2cm,width=\linewidth]{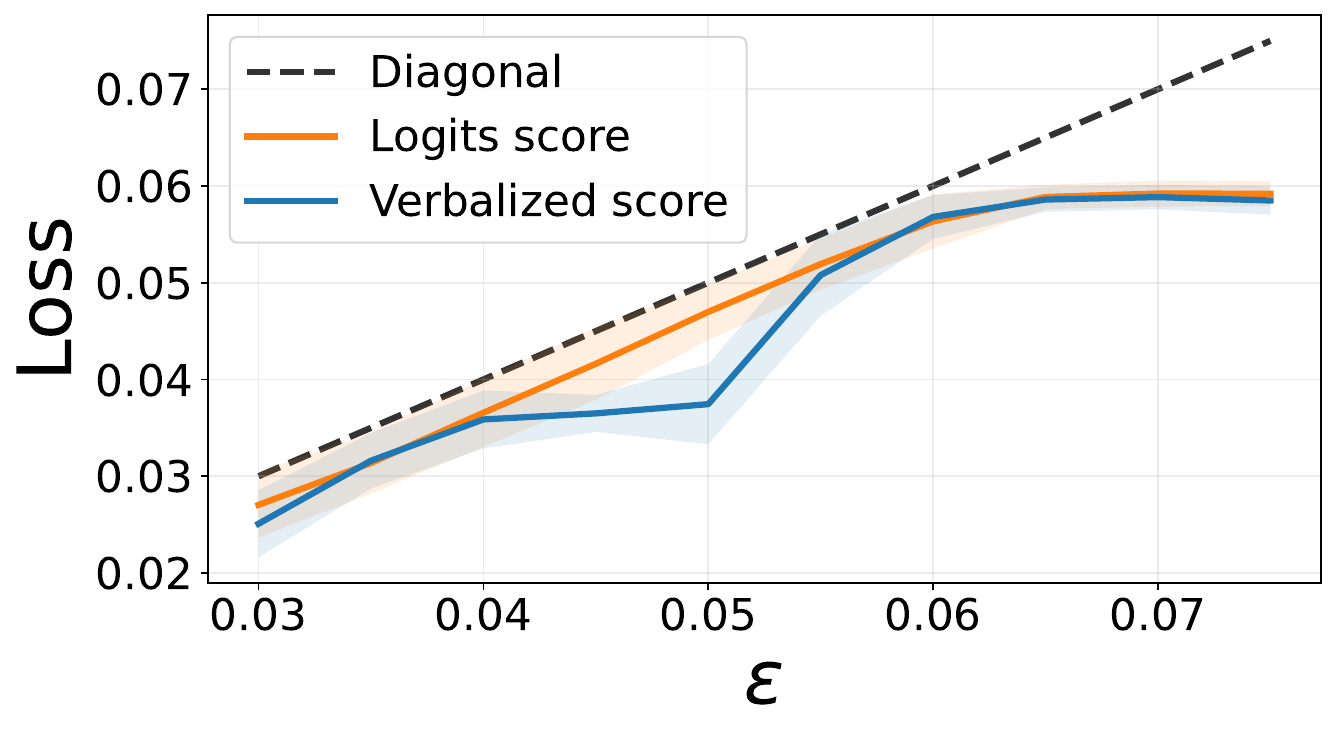}
        \caption{Semantic loss}
        \label{fig:math500_loss}
    \end{subfigure}
    \hfill
    \begin{subfigure}[b]{0.32\linewidth}
        \centering
        \includegraphics[height=3.2cm,width=\linewidth]{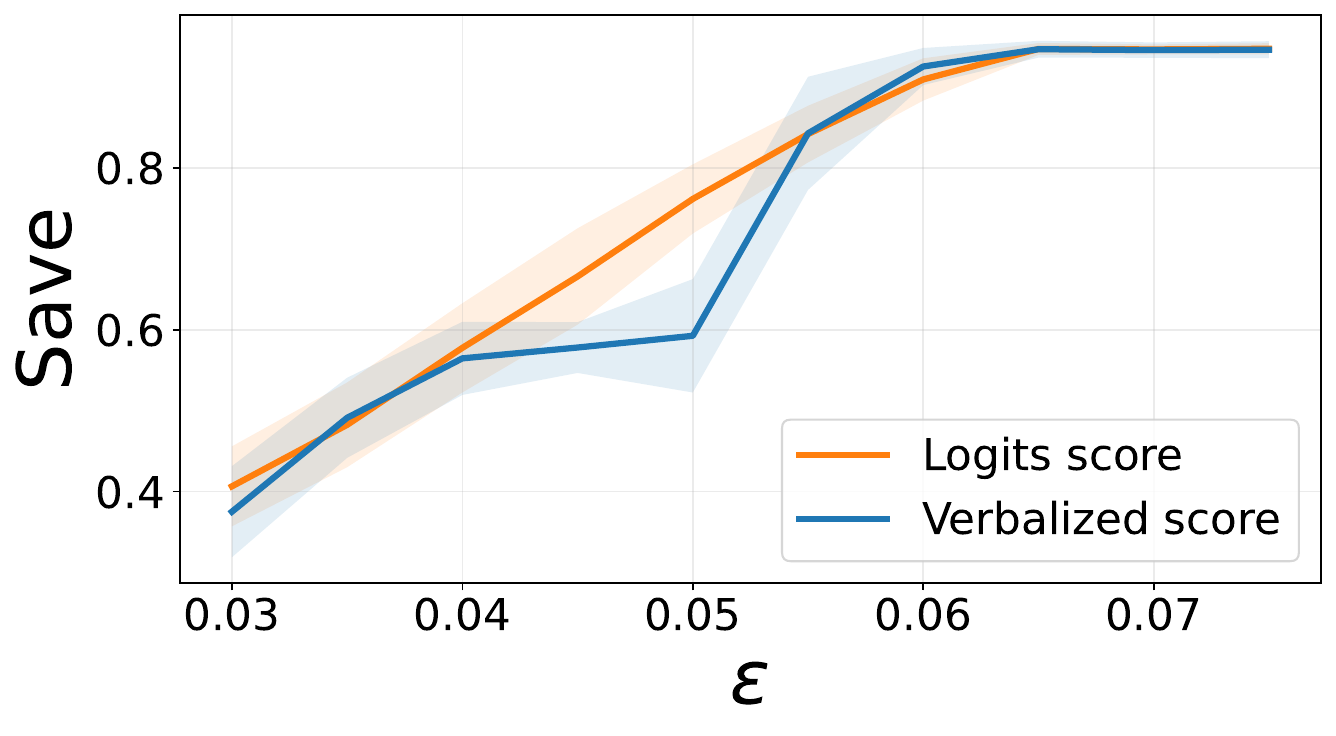}
        \caption{Cost saving}
        \label{fig:math500_cost}
    \end{subfigure}
    \hfill
    \begin{subfigure}[b]{0.32\linewidth}
        \centering
        \includegraphics[height=3.2cm,width=\linewidth]{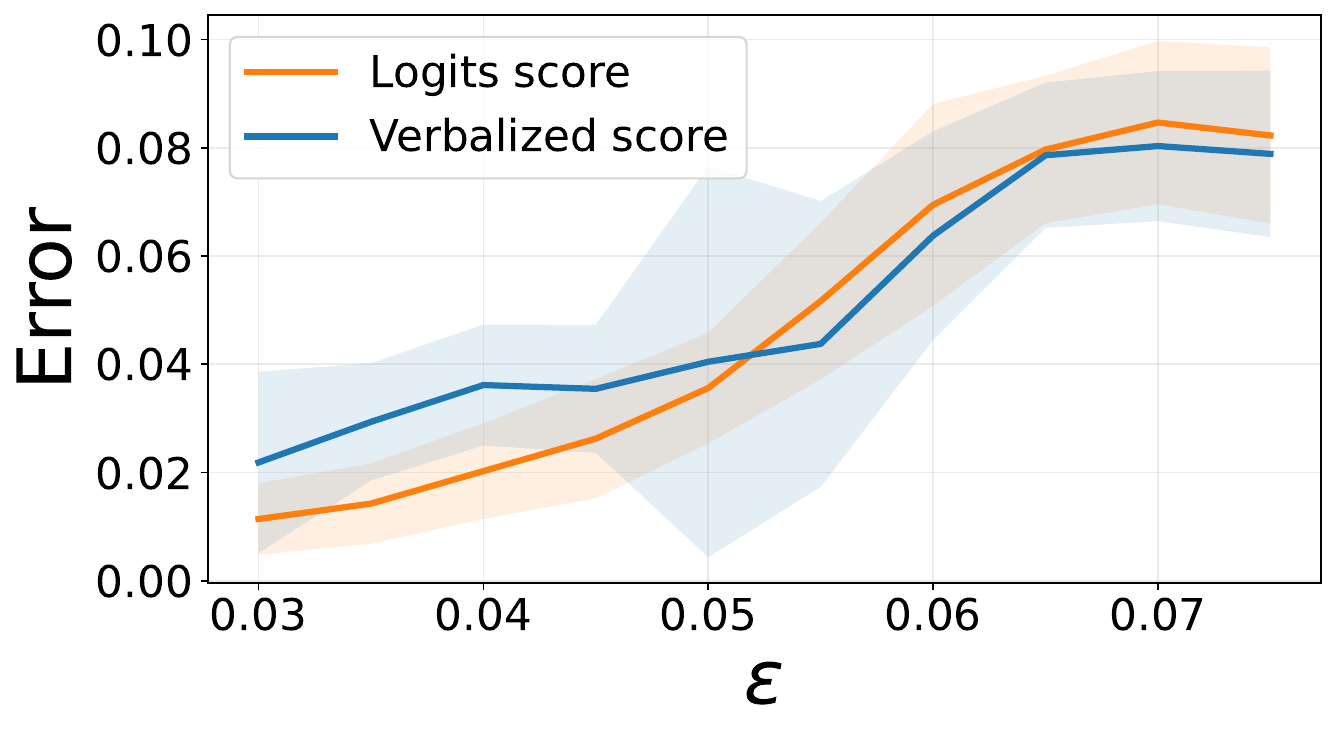}
        \caption{Error}
        \label{fig:math500_error}
    \end{subfigure}

    \caption{
    \textbf{HyPAC achieves higher cost savings while maintaining controlled semantic loss and error on MATH-500.}
    Loss, cost savings, and error of HyPAC with semantic loss on MATH-500 at confidence level $\alpha=0.05$.
    We use Qwen3-4B-Instruct-2507 as the non-thinking model,
    Qwen3-4B-Thinking-2507 as the thinking model, and Qwen3-8B-Embedding as the embedding model.
    }
    \label{fig:semantic_math500_results}
    \vspace{-6pt}
\end{figure}

\subsection{Performance of HyPAC under the API cost function}
\label{subsec:api_cost_performance}

\paragraph{HyPAC reduces the annotation cost under the API cost function.}
We evaluate whether HyPAC effectively reduces annotation cost under API-based cost functions on MMLU-Redux. Figure~\ref{fig:api_cost_function} presents the annotation results of HyPAC and PAC Labeling under the API cost function.
The results show that HyPAC consistently achieves substantially higher cost savings while maintaining controlled annotation error compared to PAC Labeling, demonstrating the superiority of HyPAC.
Overall, these results demonstrate that HyPAC effectively reduces annotation cost under the API cost function while preserving tight error control.

\begin{figure*}
    \centering

    \begin{subfigure}[b]{0.48\textwidth}
        \centering
        \includegraphics[height=4.0cm,width=7cm]{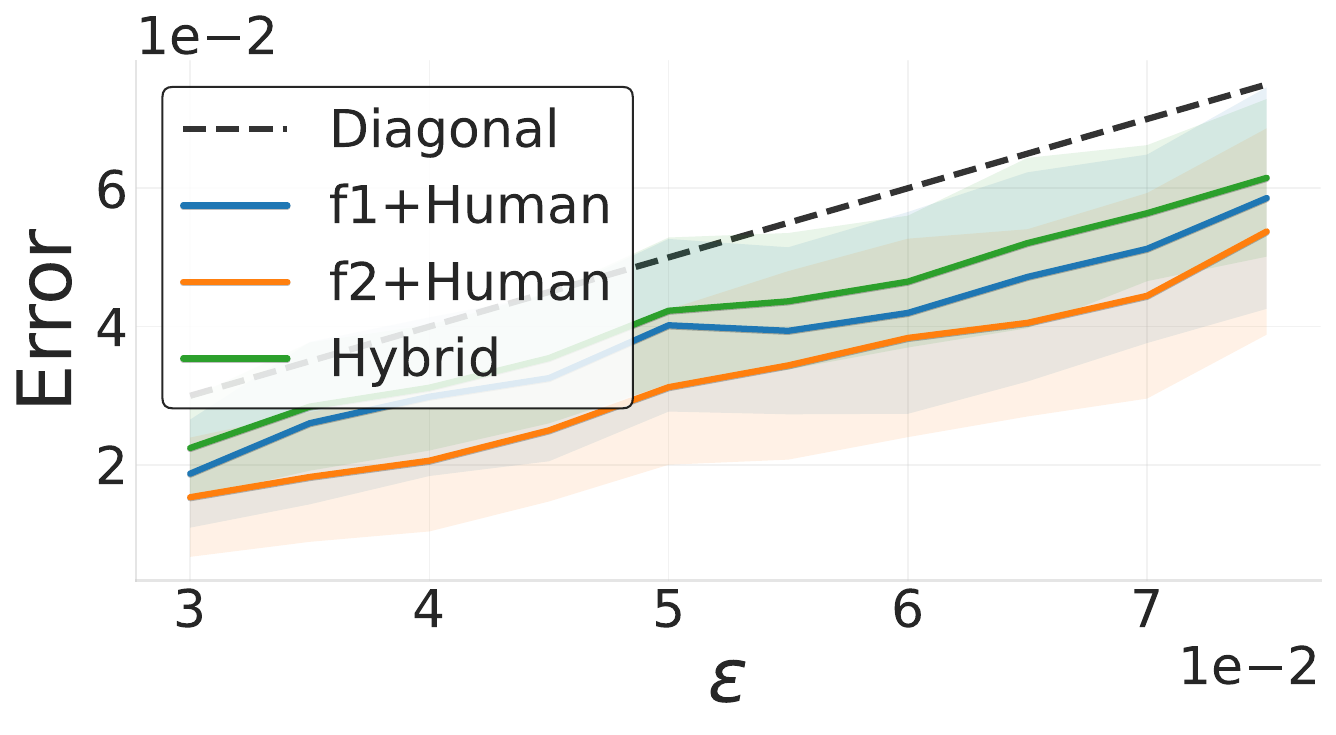}
    \end{subfigure}
    \hfill
    \begin{subfigure}[b]{0.48\textwidth}
        \centering
        \includegraphics[height=4.0cm,width=7cm]{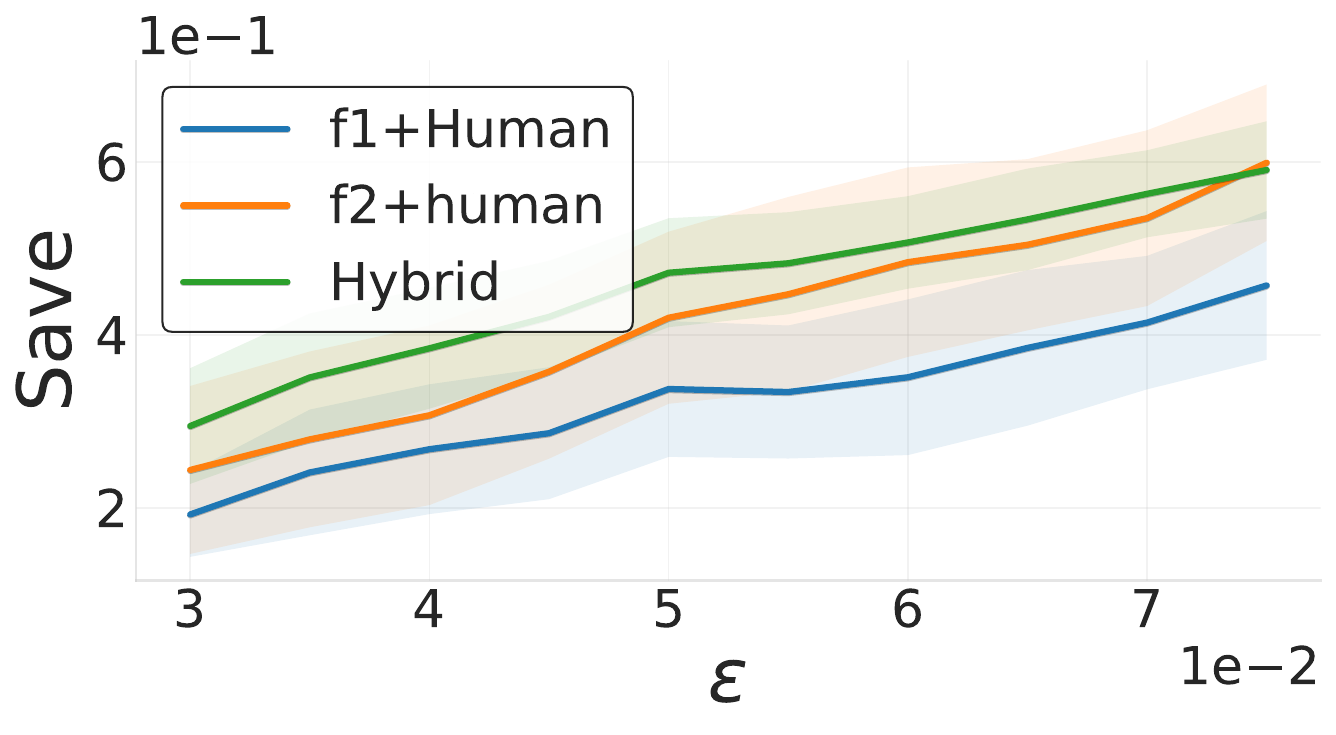}
    \end{subfigure}

    {(a) Qwen-32B model pair}

    \vspace{6pt}

    \begin{subfigure}[b]{0.48\textwidth}
        \centering
        \includegraphics[height=4.0cm,width=7cm]{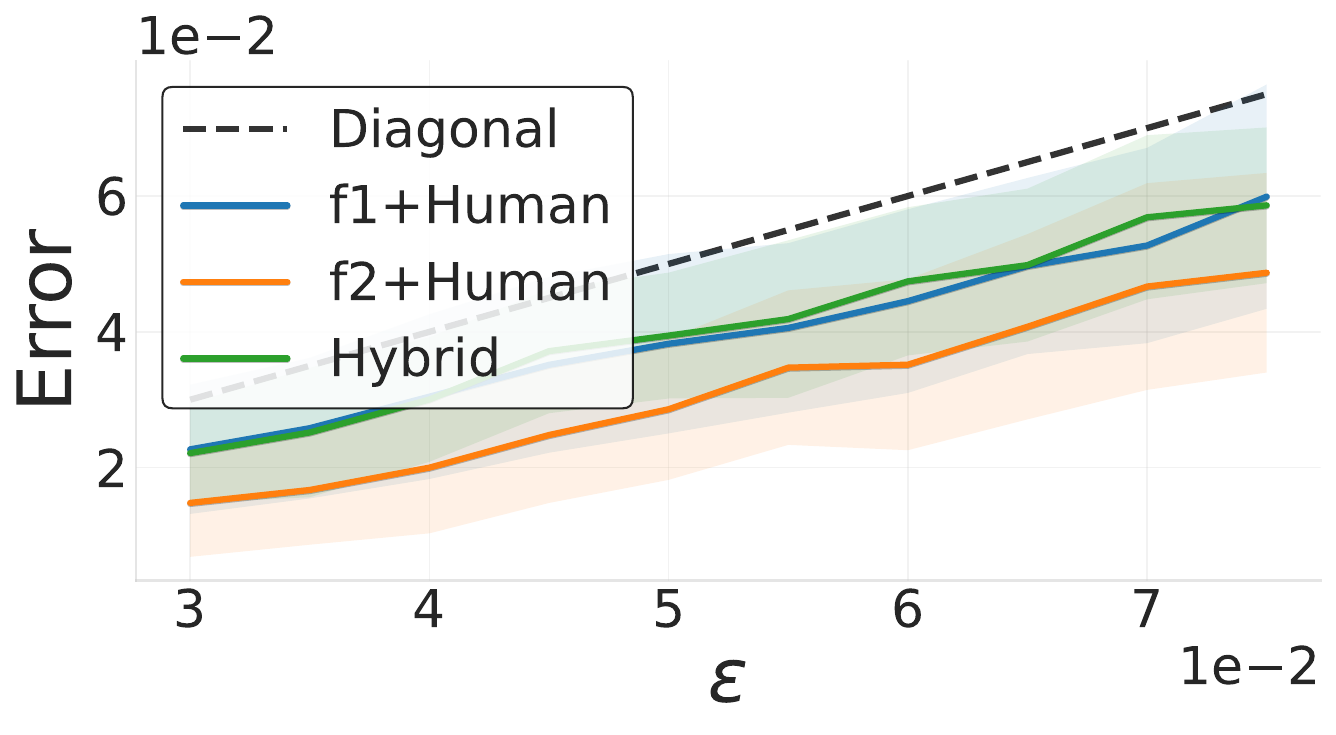}
        \label{fig:mmlu_loss}
    \end{subfigure}
    \hfill
    \begin{subfigure}[b]{0.48\textwidth}
        \centering
        \includegraphics[height=4.0cm,width=7cm]{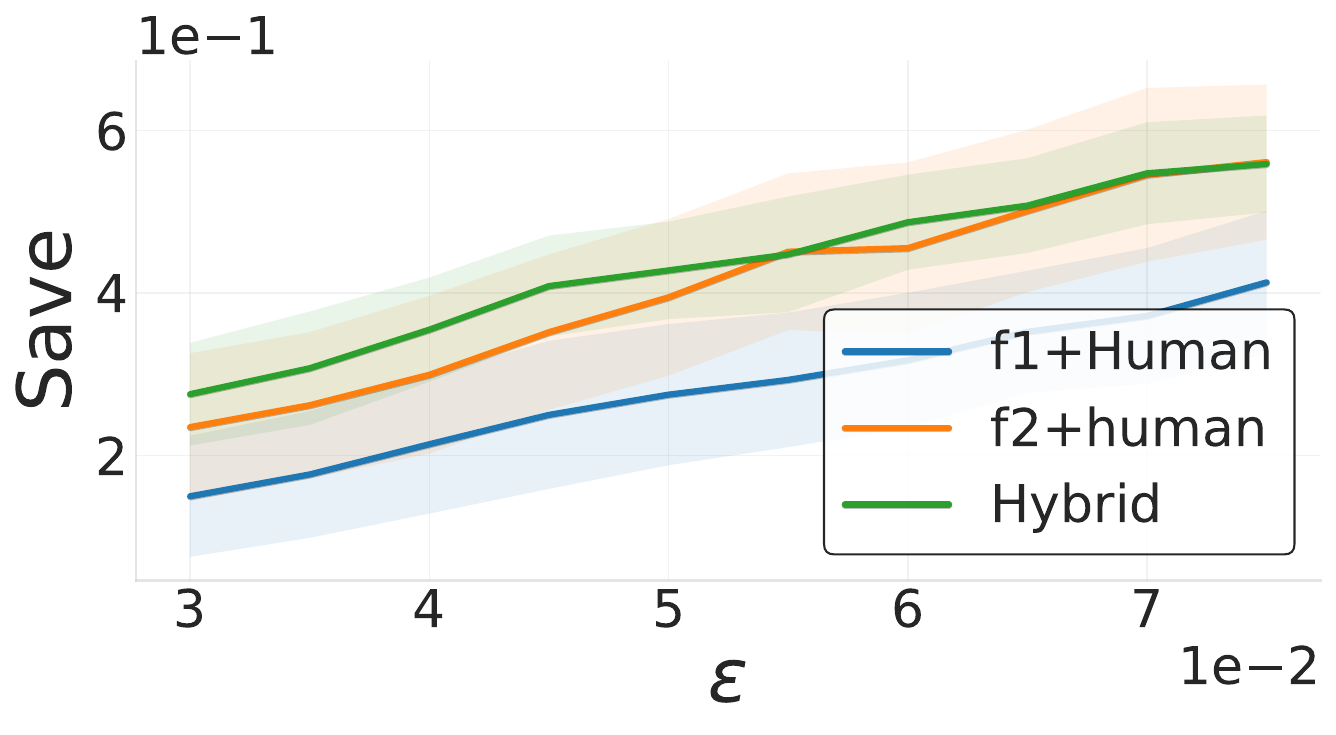}
        \label{fig:mmlu_cost}
    \end{subfigure}

    {(b) Llama-70B model pair}

    \caption{
    \textbf{HyPAC achieves higher cost savings than PAC Labeling while maintaining controlled error under the API cost function.}
    Error and cost savings of HyPAC compared against PAC Labeling baselines under the API cost function.
Experiments are conducted on MMLU-Redux with logits-based uncertainty score under a confidence level of $\alpha = 0.05$.
The shaded areas represent standard deviations.
    }
    \label{fig:api_cost_function}
    \vspace{-8pt}
\end{figure*}

\subsection{Ablation Study on Calibration Set Size}
\label{subsec:calibration_set_size}
\paragraph{Robust to calibration set size}
We conduct experiments to investigate the stability of error and cost savings under different calibration set sizes.
Specifically, we repeat the experiments with varying calibration ratios to examine how the size of the calibration set influences performance.
The results are presented in Figure~\ref{fig:cal_ratio_math500_mmlu}. 
The results show that HyPAC tightly controls the annotation error below the target level, while a larger calibration size leads to slightly smaller shaded areas, indicating more stable annotation results.
In addition, we observe that using a larger calibration set size can slightly increase the cost savings.

\begin{figure*}
    \centering

    \begin{subfigure}[b]{0.48\textwidth}
        \centering
        \includegraphics[height=3.0cm,width=6.2cm]{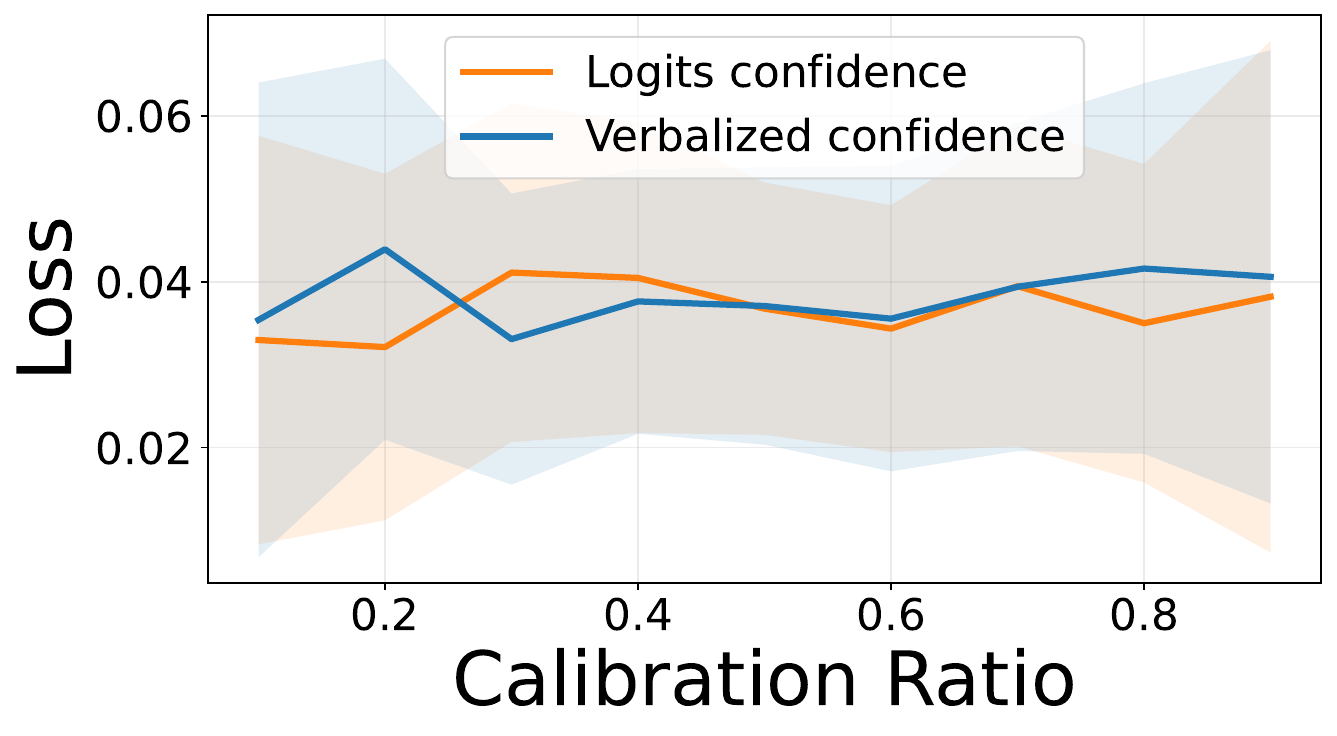}
    \end{subfigure}
    \hfill
    \begin{subfigure}[b]{0.48\textwidth}
        \centering
        \includegraphics[height=3.0cm,width=6.2cm]{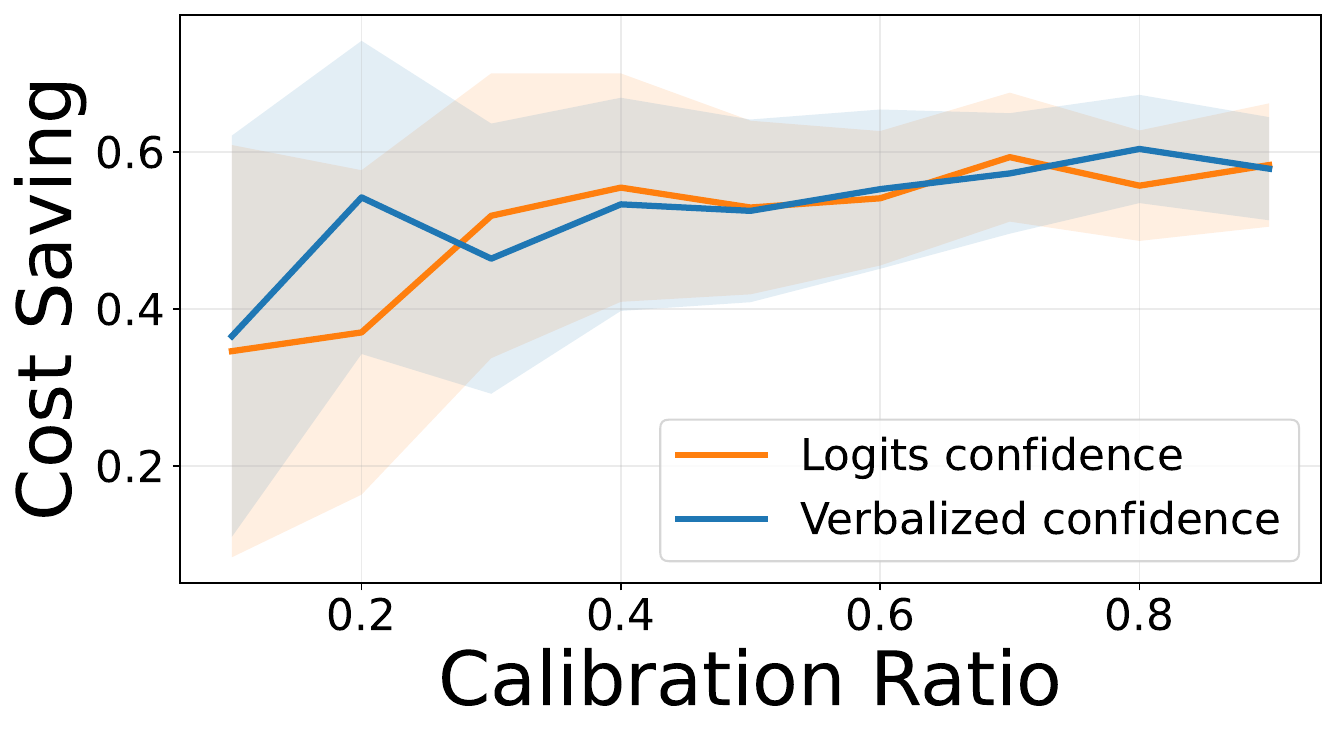}
    \end{subfigure}

    {(a) MATH-500}

    \vspace{6pt}

    \begin{subfigure}[b]{0.48\textwidth}
        \centering
        \includegraphics[height=3.0cm,width=6.2cm]{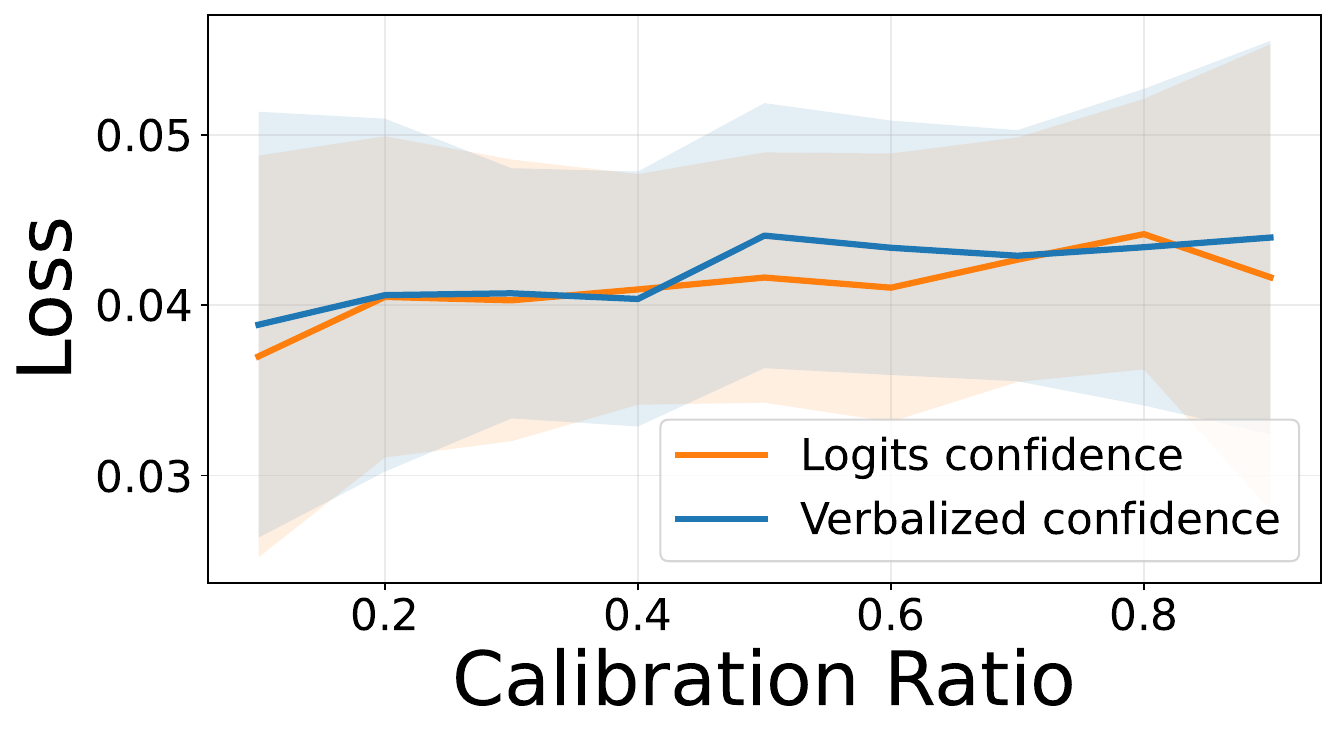}
    \end{subfigure}
    \hfill
    \begin{subfigure}[b]{0.48\textwidth}
        \centering
        \includegraphics[height=3.0cm,width=6.2cm]{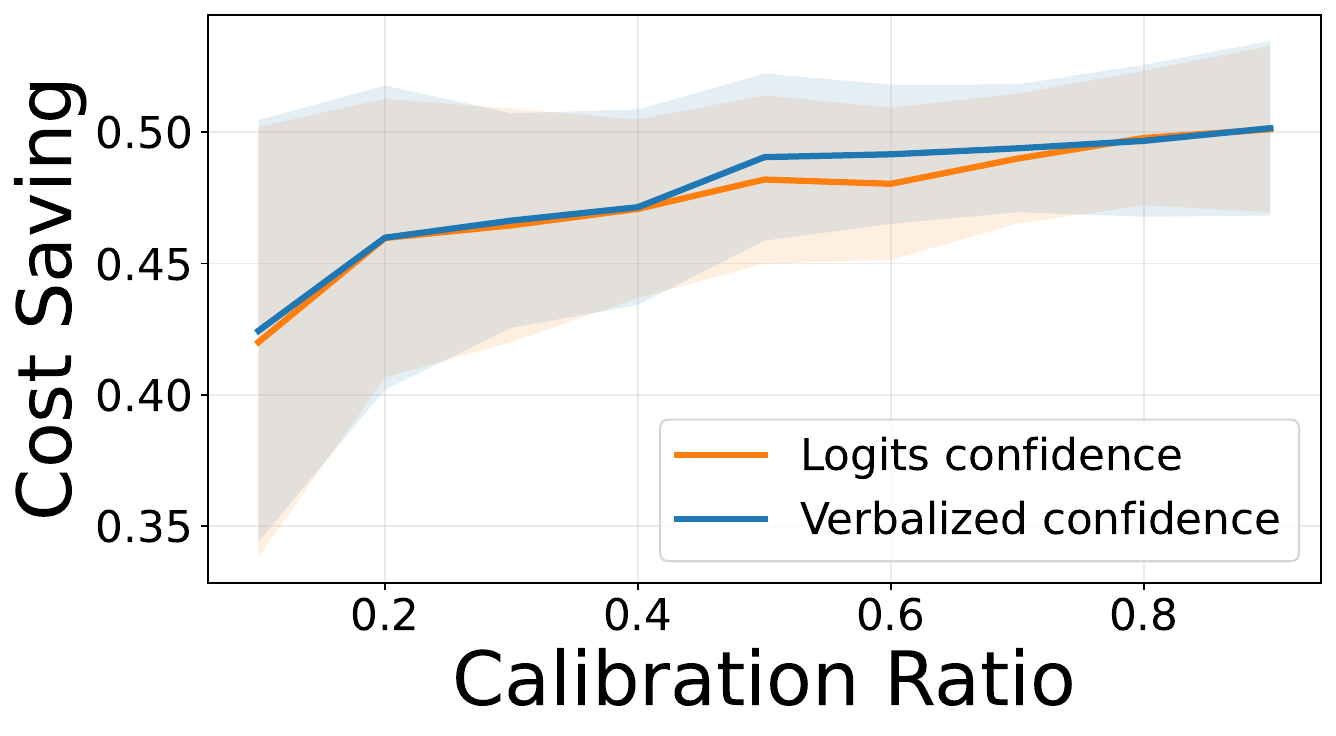}
    \end{subfigure}

    {(b) MMLU-Redux}

    \caption{
    \textbf{HyPAC maintains stable performance across different calibration set sizes.}
    Error and cost savings of HyPAC under different calibration set sizes with confidence level $\alpha=0.05$.
    The shaded areas indicate one standard deviation.
    }
    \label{fig:cal_ratio_math500_mmlu}
    \vspace{-8pt}
\end{figure*}

\subsection{Performance of HyPAC under finite-sample UCBs}
\label{subsec:different_ucb}
\paragraph{Betting-based UCB achieves the best finite-sample trade-off.}
Figure~\ref{fig:three-parallel} compares the annotation performance of HyPAC under four different upper confidence bounds (UCBs): the CLT-based UCB, Hoeffding-based UCB, empirical Bernstein UCB, and betting-based UCB.
A detailed description of these UCB constructions is provided in Appendix~\ref{app:ucb-validity}.
Among all the finite-sample UCBs, the betting-based UCB achieves the best cost--quality trade-off, as it yields the tightest control of the $(1-\alpha)$-quantile loss and the highest cost savings.
In addition, we observe that the $(1-\alpha)$-quantile loss under the CLT-based UCB occasionally exceeds the target threshold~$\epsilon$.
This is because the CLT-based UCB is only asymptotically valid and does not have a guarantee in finite samples.
Overall, these results show the advantage of betting-based UCB in finite-sample regimes.

\begin{figure}[t]
    \centering
    \begin{minipage}{0.32\linewidth}
        \centering
        \includegraphics[width=\linewidth]{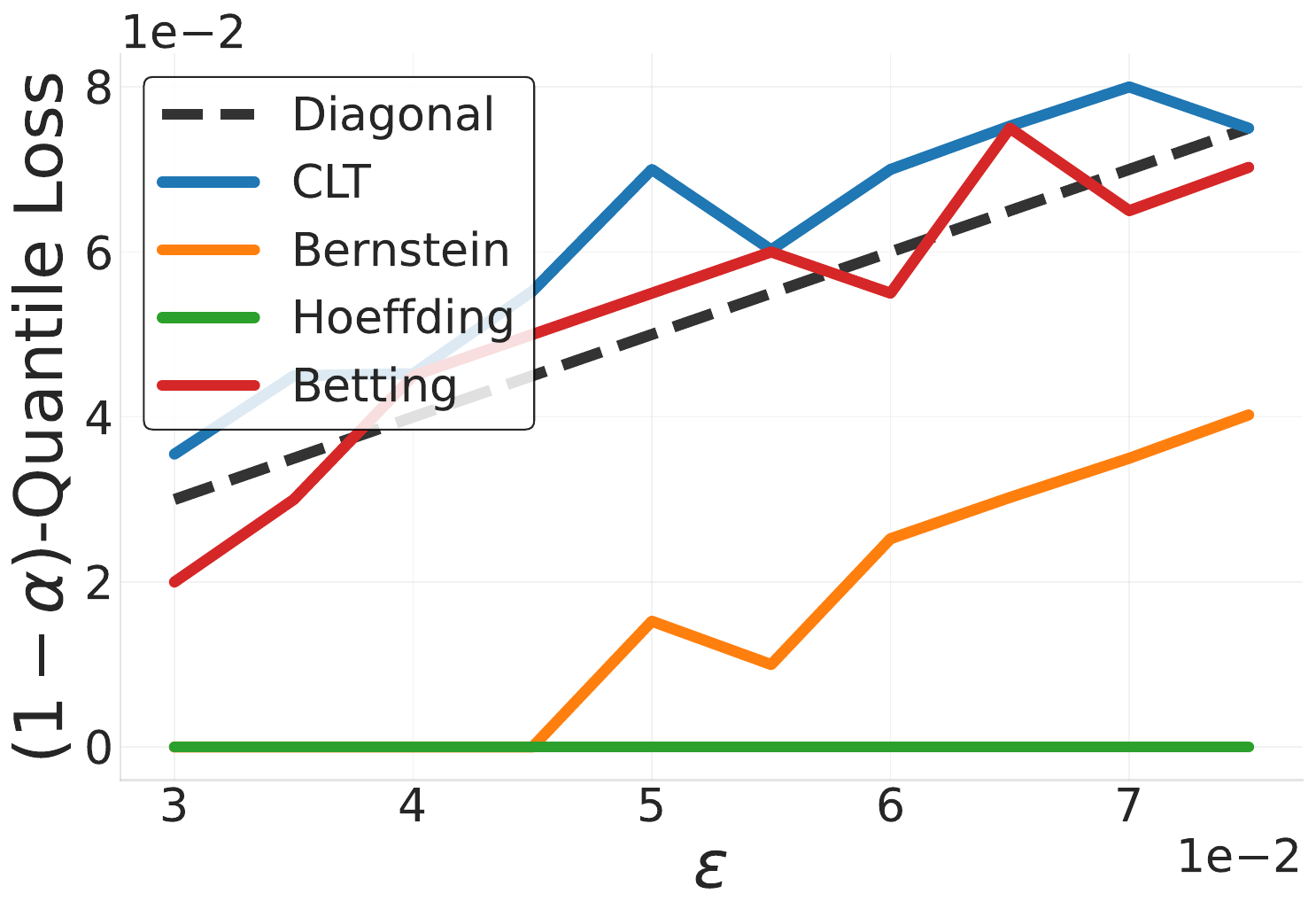}
        \caption*{$95\%$ Quantile loss}
    \end{minipage}\hfill
    \begin{minipage}{0.32\linewidth}
        \centering
        \includegraphics[width=\linewidth]{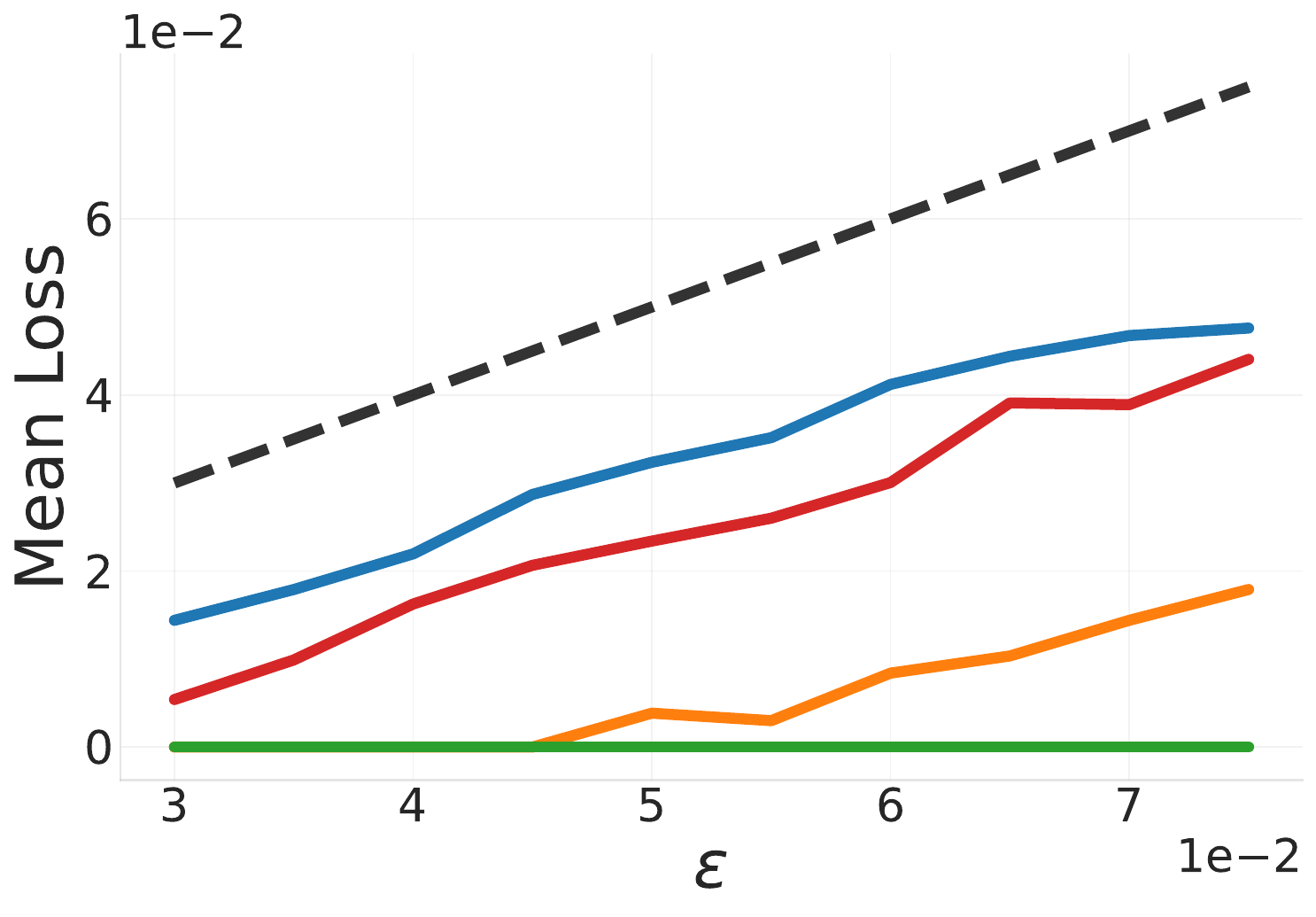}
        \caption*{Mean loss}
    \end{minipage}\hfill
    \begin{minipage}{0.32\linewidth}
        \centering
        \includegraphics[width=\linewidth]{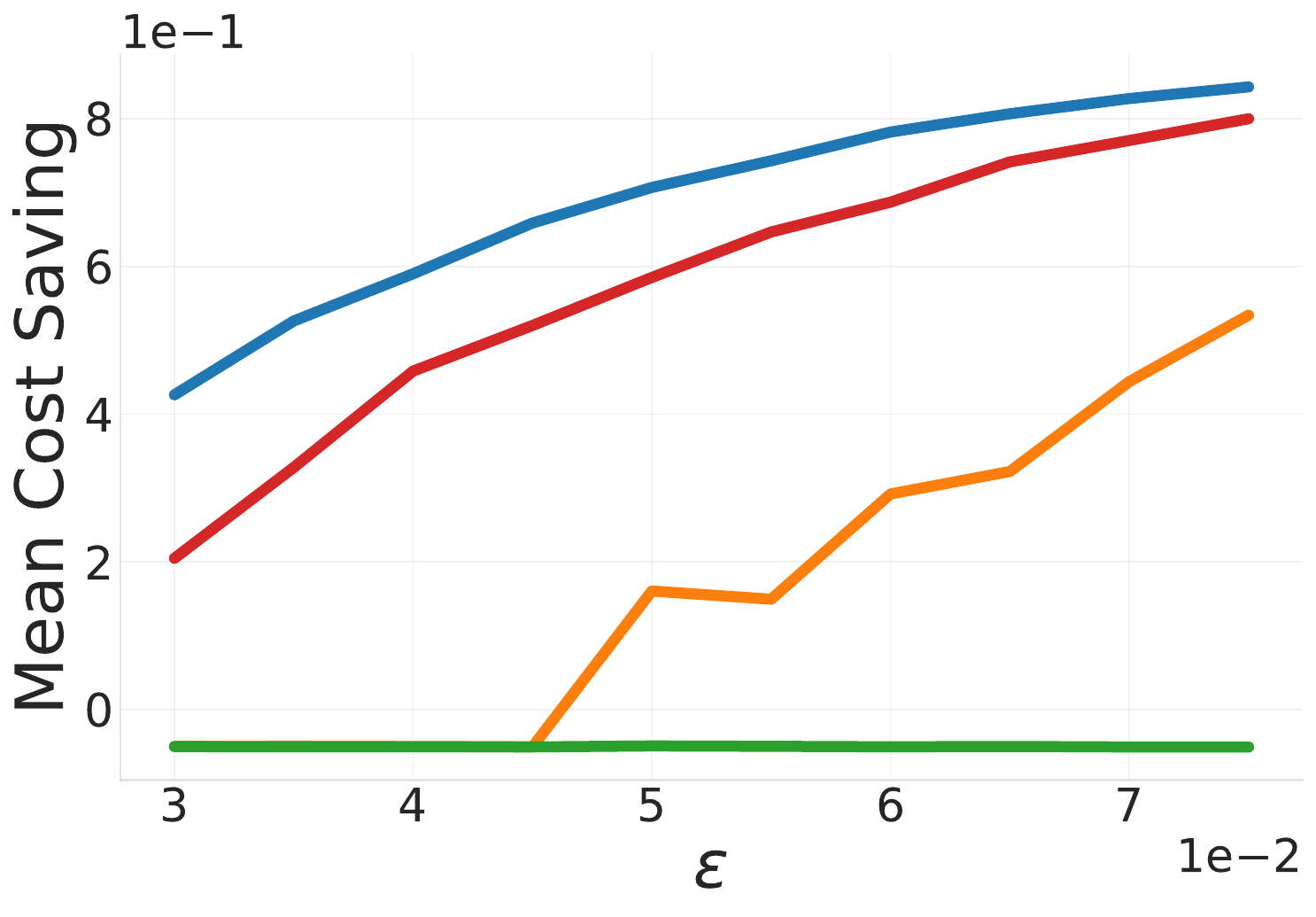}
        \caption*{Mean cost saving}
    \end{minipage}
    \caption{
    \textbf{The CLT-based UCB provides tighter bounds and better cost-performance tradeoff than the Hoeffding-based UCB.}
    Annotation performance of HyPAC under different upper confidence bounds. The experiments are conducted on the MATH-500 using Qwen3-4B-Instruct as the non-thinking model and Qwen3-4B-Thinking as the thinking model with logits-based uncertainty score under a confidence level of $\alpha=0.05$.
    }
    \label{fig:three-parallel}
\end{figure}

\subsection{Performance of HyPAC under distribution shift}
\label{subsection:distribution_shift}
We emphasize that the calibration set is sampled i.i.d. from a large unlabeled dataset, leaving the remaining unlabeled data for testing. 
Given a large unlabeled dataset, we first annotate a small subset as the calibration set, then apply HyPAC to the remaining data.
This process ensures that the calibration and test datasets are naturally i.i.d., satisfying the standard assumptions of HyPAC.
Thus, HyPAC generally does not encounter distribution shifts in annotation tasks.

Although our method does not provide theoretical guarantees under distribution shift, we evaluate the empirical robustness of HyPAC using ImageNet \citep{deng2009imagenet} and ImageNet-C (Brightness) \citep{hendrycks2019benchmarking}. 
Specifically, we use ImageNet as the calibration set and ImageNet-C Brightness with varying severity levels as the test set.
We conduct experiments on image annotation tasks instead of LLM open-ended generation tasks because there are currently no benchmark datasets with different levels of covariate shift for open-ended LLM generation, as far as we know.
We use ResNet-34 \citep{he2016deep} as the small model $\tilde{f}_1$, and ResNet-152 as the large model $\tilde{f}_2$.
We define the cost of using ResNet-34, ResNet-152, and expert annotation to be $c_1=1$, $c_2=2$, and $c_h=8$, respectively.
We use one minus the maximum softmax probability as the uncertainty score function.

We present the performance of HyPAC across various testing sets in Table \ref{tab:brightness_results}.
The results show that the cost saving of HyPAC is getting worse with a higher severity of distribution shift (may be due to the degraded accuracy), while the annotation error is relatively insensitive.
Across all the severity levels, HyPAC successfully controls the annotation error below the target level, demonstrating that HyPAC is empirically robust to distribution shift.

\begin{table}[t]
\centering
\caption{Performance of HyPAC under varying severity levels of ImageNet-C Brightness corruption. 
HyPAC maintains stable error control under different levels of shift, while the cost saving decreases with increasing corruption severity.}
\label{tab:brightness_results}
\footnotesize
\setlength{\tabcolsep}{4pt}
\begin{tabular}{lccccc}
\toprule
\multirow{2}{*}{Test Dataset} 
& \multicolumn{2}{c}{$\epsilon=0.05$} 
& \multicolumn{2}{c}{$\epsilon=0.1$} \\
\cmidrule(lr){2-3} \cmidrule(lr){4-5}
 & Error (\%) & Save (\%) & Error (\%) & Save (\%) \\
\midrule
Severity 1 & 4.51 & 40.75 & 9.75 & 55.19 \\
Severity 2 & 4.54  & 38.27 & 9.88 & 52.88 \\
Severity 3 & 4.52  & 34.25 & 9.87 & 48.99 \\
Severity 4 & 4.25 & 27.98 & 9.94 & 42.81 \\
Severity 5 & 4.11  & 20.69 & 9.78 & 34.96 \\
\bottomrule
\end{tabular}
\end{table}


\subsection{More annotation results across diverse models and datasets}
\label{subsec:additional_main_experiment}
Table~\ref{tab:results_of_other_llm} reports the annotation performance of HyPAC under two model pair configurations: 
(i) Llama-3.1-8B-Instruct as the non-thinking model paired with DeepSeek-R1-Distill-Llama-8B as the thinking model, and 
(ii) Qwen2.5-32B-Instruct as the non-thinking model paired with DeepSeek-R1-Distill-Qwen-32B as the thinking model.
HyPAC achieves the highest cost savings across most datasets.
However, on the Zebra-Logic dataset with the Llama-8B model pair, HyPAC incurs a higher annotation cost than using expert annotation alone.
This behavior arises because the Llama-8B model pair attains low accuracy on Zebra-Logic (13.2\% and 34.8\%), and HyPAC relies heavily on expert annotations to satisfy the accuracy constraint.
This behavior shows that the cost savings of HyPAC are closely tied to the capability of the employed LLMs.
Higher-capacity LLMs require fewer expert annotations to satisfy accuracy constraints, thus yielding greater cost savings.

\begin{table*}[t]
\centering
\caption{\textbf{HyPAC achieves high cost savings across different model pairs while maintaining controlled annotation error.}
Annotation performance of HyPAC across four benchmark datasets using two model pairs: 
Llama-3.1-8B-Instruct (non-thinking model $\tilde{f}_1$) with DeepSeek-R1-Distill-Llama-8B (thinking model $\tilde{f}_2$), and 
Qwen2.5-32B-Instruct (non-thinking model $\tilde{f}_1$) with DeepSeek-R1-Distill-Qwen-32B (thinking model $\tilde{f}_2$).
We report results for the annotation error and cost saving at $\epsilon=0.05$ and $\alpha=0.05$ using the logits-based uncertainty score function and token-based cost function.
\textbf{Bold} numbers indicate the best results.
}

\label{tab:results_of_other_llm}
\renewcommand\arraystretch{1.1}
\resizebox{1.00\textwidth}{!}{
\setlength{\tabcolsep}{5mm}{
\begin{tabular}{ll|ccc|ccc}
\toprule
\multirow{2}{*}{Datasets} 
& \multirow{2}{*}{Metric} 
& \multicolumn{3}{c}{Llama-8B model pair} 
& \multicolumn{3}{c}{Qwen-32B model pair} \\
\cmidrule(lr){3-5} \cmidrule(lr){6-8}
& 
& $\tilde{f}_1$ + $\tilde{f}_2$ + Human
& $\tilde{f}_1$ + Human
& $\tilde{f}_2$ + Human
& $\tilde{f}_1$ + $\tilde{f}_2$ + Human 
& $\tilde{f}_1$ + Human 
& $\tilde{f}_2$ + Human \\
\midrule
\multirow{2}{*}{MMLU-Redux}
& Error (\%)         
& 3.95 & 3.87 & 4.08 & 4.00 & 3.83 & 4.09 \\
& Save (\%)   
& \textbf{16.83} & 7.19 & 13.40 & 45.29 & 32.93 & \textbf{50.57} \\
\midrule
\multirow{2}{*}{MATH-500}
& Error (\%)       
& 3.51 & 2.82 & 3.14 & 3.96 & 3.46 & 3.50 \\
& Save (\%)    
& \textbf{10.25} & -5.37 & 10.18 & \textbf{56.89} & 20.49 & 36.35 \\
\midrule
\multirow{2}{*}{MATH-L5}
& Error (\%)       
& 3.21 & 3.17 & 3.55 & 3.70 & 3.24 & 3.32 \\
& Save (\%)    
& -5.39 & -12.03 & \textbf{11.38} & \textbf{72.67} & 9.75 & 58.80 \\
\midrule
\multirow{2}{*}{ZebraLogic}
& Error (\%)         
& 3.55 & 3.68 & 3.55 & 4.30 & 3.71 & 3.57 \\
& Save (\%)     
& -20.11 & \textbf{-19.94} & -25.14 & \textbf{6.35} & 4.10 & -9.14 \\
\midrule
\multirow{2}{*}{$\text{HumanEval}^{+}$}
& Error (\%)         
& 3.78 & 3.80 & 2.78 & 3.21 & 3.38 & 0.78 \\
& Save (\%)   
& \textbf{47.64} & 31.95 & 5.97 & \textbf{90.41} & 16.05 & 85.34 \\
\bottomrule
\end{tabular}
}
}
\vspace{-5pt}
\end{table*}

\subsection{More reliability diagrams of logits-based uncertainty score and verbalized uncertainty score}
\label{subsec:more_reliability}
We present the reliability diagrams of Qwen3-4B-Instruct-2507 on MMLU-redux, MATH-L5, Zebra-Logic and $\text{HumanEval}^+$ in Figure~\ref{fig:reliability_8panel}.
The results show that logits-based confidence achieves lower ECE than verbalized confidence in all datasets except for Zebra-Logic.

\begin{figure*}[t]
    \centering
    \includegraphics[width=1\textwidth]{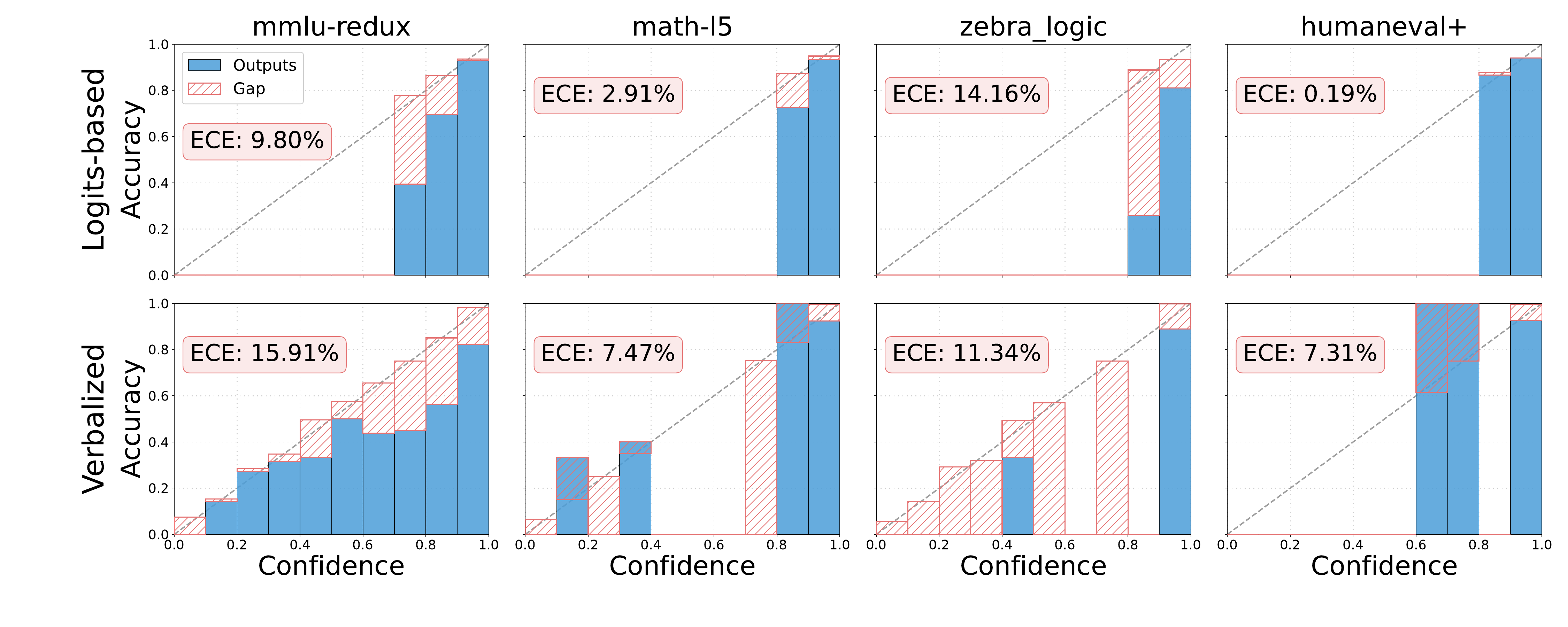}
    \caption{
        Reliability diagrams of Qwen3-4B-Instruct-2507 across four benchmark datasets
        (MMLU-Redux, MATH-L5, Zebra-Logic, and $\text{HumanEval}^+$).
        Each column corresponds to a dataset.
        The top row uses logits-based confidence, while the bottom row uses verbalized confidence.}
    \label{fig:reliability_8panel}
\end{figure*}

\end{document}